\renewcommand{\ALG@name}{Algorithm}
\newcommand\StateX{\Statex\hspace{\algorithmicindent}}
\colorlet{myBlue}{blue!60!black}
\crefname{assumption}{assumption}{assumptions}
\tikzset{events/.style={ellipse, draw, align=center},}
\newtheoremstyle{styledef}
  {2pt}
  {0pt}
  {}
  {0em}
  {\bfseries}
  {}
  {1em}
  {}
\theoremstyle{styledef}
\newcommand{\E}{{{\mathbb E}}}
\newcommand{\R}{{{\mathbb R}}}
\newcommand{\cH}{{{\mathcal H}}}
\newcommand{\X}{{{\mathcal X}}}
\newcommand{\Y}{{{\mathcal Y}}}
\newcommand{\Z}{{{\mathcal Z}}}
\newcommand{\cX}{{{\mathcal X}}}
\newcommand{\cM}{{{\mathcal M}}}
\newcommand{\cN}{{{\mathcal N}}}
\newcommand{\cL}{{{\mathcal L}}}
\newcommand{\cP}{{{\mathcal P}}}
\newcommand*\bigcdot{\mathpalette\bigcdot@{.5}}
\newcommand*\bigcdot@[2]{\mathbin{\vcenter{\hbox{\scalebox{#2}{$\m@th#1\bullet$}}}}}
\newcommand{\indep}{\perp \!\!\! \perp}
\newcommand{\eqdef}{=\vcentcolon}
\newcommand{\defeq}{\vcentcolon=}
\newtheorem{assumption}{Assumption}
\newtheorem{corollary}{Corollary}
\newtheorem{thm}{Theorem}
\newtheorem{lemma}{Lemma} 
\newtheorem{remark}{Remark}
\DeclareMathOperator*{\argmin}{argmin}
\title{Causal Inference with Treatment Measurement Error: \\ A Nonparametric Instrumental Variable Approach}
\author[1]{\href{mailto:<yuchen.zhu.18@ucl.ac.uk>?Subject=Your UAI 2022 paper}{Yuchen~Zhu}{}}
\author[2,3]{Limor~Gultchin}
\author[1]{Arthur~Gretton}
\author[1]{Matt~Kusner}
\author[1]{Ricardo~Silva}
\affil[1]{%
    Department of Computer Science\\
    University College London\\
    UK
}
\affil[2]{%
    Department of Computer Science\\
    University of Oxford\\
    UK
}
\affil[3]{%
    The Alan Turing Institute\\
    London, UK
  }
\begin{document}

\maketitle

\begin{abstract}
 We propose a kernel-based nonparametric estimator for the causal effect when the cause is corrupted by error. We do so by generalizing estimation in the instrumental variable setting.
 Despite significant work on regression with measurement error, additionally handling unobserved confounding in the continuous setting is non-trivial: we have seen little prior work.
 As a by-product of our investigation, we clarify a connection between mean embeddings and characteristic functions, and how learning one simultaneously allows one to learn the other. 
 This opens the way for kernel method research to leverage existing results in characteristic function estimation. 
 Finally, we empirically show that our proposed method, MEKIV, improves over baselines and is robust under changes in the strength of measurement error and to the type of error distributions.
\end{abstract}

\section{Introduction}

Real world data poses many problems for causal effect estimation. Unmeasured confounding, the existence of hidden common causes of a treatment $X$ and an outcome of interest $Y$, is a problem that lies at the heart of many applied sciences. Solving this problem led to a variety of approaches, the most common based on the idea of instrumental variables (IVs): an auxiliary variable $Z$ independent of $Y$ upon a perfect intervention on $X$ \citep{pearl2009causality,hernan2020}, which is predictive of $X$ but not caused by it.

A less commonly studied challenge is when \emph{the treatment is not directly observed}. For instance, we may want to learn the effect of taking a drug ($X = 1$) against not taking it ($X = 0$), where we incentivize the patients to take it or not ($Z = 1$ vs $Z = 0$). It is not necessarily the case that $X = Z$, because the patients do it at home instead of a hospital with supervision, and so they may not comply with the incentive. This \emph{non-compliance} problem is compounded with the \emph{measurement error} problem: a self-reported measurement of taking ($M = 1$) or not taking ($M = 0$) the drug does not imply $X = M$, because the patient may be lying or just forgetful. An instrumental variable approach to estimate an \emph{average treatment effect} (ATE) such as $\mathbb E[Y~|~do(X = 1)] - \mathbb E[Y~|~do(X = 0)]$ \citep{pearl2009causality} may fail to give reliable results if our data consists of records of $(Z, M, Y)$, but the assumption $X = M$ does not hold.

\begin{figure}[t]
    \centering
    \includegraphics[scale=0.5]{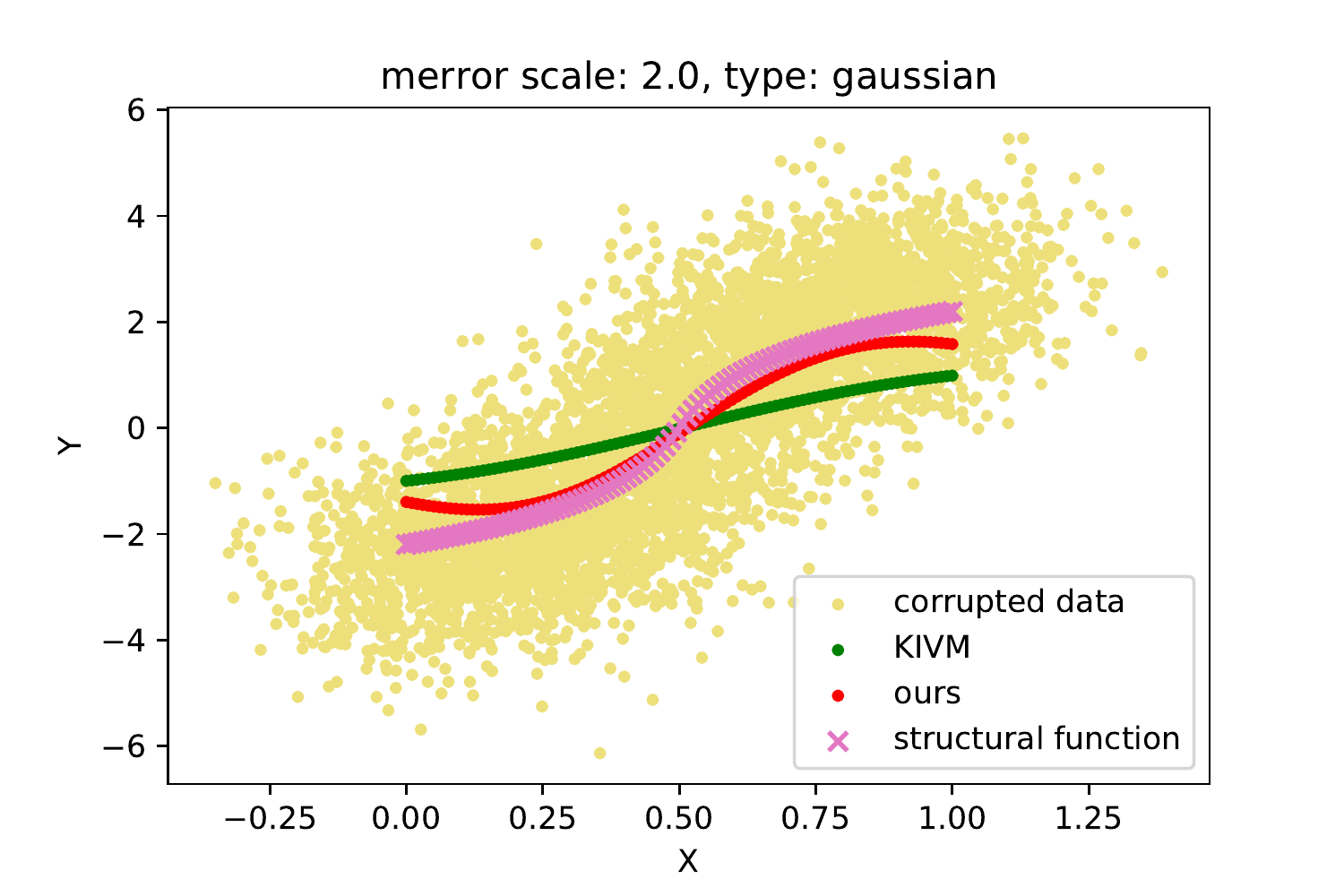}
    \caption{Comparison of curves fitted by our method and KIVM under a corrupted treatment measurement $X$ against with true curve. KIVM is a method we will discuss in the sequel, which ignores that measurements of $X$ are corrupted by additive noise.}
    \label{fig:intro_comparison}
\end{figure}

A related issue happens when postulating latent \emph{constructs} as causes. In a widespread example by \citep{bollen1989}, a model for the effects of ``industrialization level'' of a country in its political freedom $Y$ is considered. We may operationalize this construct by postulating a space of possible interventions $Z$ on industrialization $X$ that keep the relation between $X$ and $Y$ invariant. However, it remains the case that $X$ is not directly observable but for indirect measurements $M$, such as the GDP or the proportion of labor force working in industry.

Of relevance, in both classes of indirect treatment measurement problem, is that the causal relation $\mathbb E[Y~|~do(x)]$ is considered to be fundamental, with $\mathbb E[Y~|~do(m)]$ being either zero, or poorly defined, or of secondary interest (for instance, \emph{redefining} GDP may as well have a genuine causal impact, but this intervention is not the motivation behind understanding the causal impact of industrialization levels). In particular, measurement mechanisms may change more easily than the relation between the putative cause and the outcome of interest (we may redefine GDP, or collect data where the phrasing and timing of our questioning of a patient's compliance varies in different communities, while assuming that the relation between $X$ and $Y$ is invariant). In a way, this measurement problem is a counterpart to why estimating intention-to-treat effects, i.e. $\mathbb E[Y~|~do(z)]$, is not in many cases the goal of an IV analysis, despite the policy-making implications.

The need to understand effects of the mismeasured quantities on other quantities of interest motivates the study of measurement error modeling \citep{carroll2006measurement_nonlinear,schennach_review,hernan2020}. Famously, even in the linear (noncausal) regression case, na\"ively regressing $Y$ on a noisy measurement of $X$ results in \emph{attenuation error}, which essentially means that the regression coefficient will be underestimated due to the measurement error \citep{carroll2006measurement_nonlinear}. An analogous phenomenon will take place when estimating causal effects. Figure \ref{fig:intro_comparison} depicts a kernel method that attempts to estimate a $X$-$Y$ dose-response curve, ignoring measurement error in $X$, compared against the curve found by the method we propose.

The nonlinear and confounded setting is an open domain to be explored. \cite{schennach_review} suggests that, in general, three measurements are needed to identify the full joint distribution of the measurements and the latent variable. However, in cases where we can make some assumptions on the error distribution, this can be reduced. Furthermore, we are not interested in the full joint distribution with the latent variable $X$, but only the parts which we need as components of the IV regression model. To that effect, we will assume that our problem follows the Markov properties of Figure \ref{fig:merror_iv_graph}: we are interested in the structural function $f(x) \equiv \mathbb E[Y~|~do(x)]$, where observationally $Y = f(x) + \epsilon$, the error term $\epsilon$ being correlated with treatment $X$. We assume that we have access to at least two treatment measurements, $M$ and $N$, and an instrumental variable $Z$.

\begin{figure}[t]
    \centering
    \begin{tikzpicture}[roundnode/.style={circle, draw=black!100, fill=black!15, very thick, minimum size=0.5mm}, normalnode/.style={circle, draw=black!100, fill=black!0, very thick, minimum size=0.5mm},
    squarednode/.style={rectangle, draw=red!60, fill=red!5, very thick, minimum size=5mm},]
    \node[roundnode] (x) at (0,0) {$X$};
    \node[normalnode] (y) at (2,0) {$Y$};
    \node[normalnode] (z) at (-2,0) {$Z$};
    \node[roundnode] (eps) at (1,2) {$\epsilon$};
    \node[normalnode] (m) at (-1, -2) {$M$};
    \node[normalnode] (n) at (1, -2) {$N$};
    
    \draw[-{Triangle[length=3mm, width=2mm]},line width=1pt] (x) -- (y) node[above, midway] {$f$};
    \draw[-{Triangle[length=3mm, width=2mm]},line width=1pt] (z) -- (x);
    \draw[-{Triangle[length=3mm, width=2mm]},line width=1pt] (eps) -- (y);
    \draw[-{Triangle[length=3mm, width=2mm]},line width=1pt] (eps) -- (x);
    \draw[-{Triangle[length=3mm, width=2mm]},line width=1pt] (x) -- (m);
    \draw[-{Triangle[length=3mm, width=2mm]},line width=1pt] (x) -- (n);
    \end{tikzpicture}

    \caption{An instrumental variable model with confounded treatment $X$ and $Y$, where the treatment is unobservable, but has indirect measurements $M$ and $N$.}
    \label{fig:merror_iv_graph}
\end{figure}
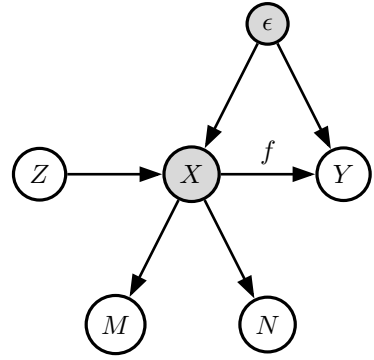

Our contribution is threefold:

\begin{itemize}
    \item we propose an estimator for the structural function $f(x)$ without requiring latent variable modeling. The resulting method can be applied without restrictive assumptions in the likelihood, such as the requirement for Gaussian error terms;
    \item in particular, we provide a method to learn the conditional mean embedding \citep{Muandet17:KME} of a latent variable distribution, which can be applied to many two-stage IV settings;
    \item we propose a way to exploit the connection between characteristic function methods and kernel methods, which may be applied to many settings outside of measurement error modelling (see Section~\ref{subsec: kme_cf}).
\end{itemize}


\section{Related Work}

Measurement bias takes several forms in causal inference. See \cite[Chapter 9]{hernan2020} for a textbook overview, including examples motivating different error structures. For instance, there is a growing literature on causal inference by adjusting for confounders which are only measured via proxies \citep[e.g][]{kuroki_pearl_14,battistin2014treatment,kallus2018causal,miao18_identifying,proximal2021,ghassami2022}. Because different components of a causal structure contribute differently to the target estimand, it is not surprising though that conditions on identifiability in our setup vary substantively from, e.g., those required by models of confounding proxies. In general, our work is related to the large field of latent variable distribution identification, particularly those involving Markovian assumptions \citep{allman:2009}. Even closer is the literature on \emph{error-in-variables} regression \citep{carroll2006measurement_nonlinear}, as it provides several results for identification and estimation in the related problem of regression analysis. \cite{schennach_review} provides a more up-to-date review, including some comments on its use in causal effect estimation. To the best of our knowledge, no direct connection between methods for continuous causal effect estimation with both unmeasured confounding and measurement error in the treatment is described in the literature. For partially identified models in \emph{discrete} spaces, \cite{finkelstein:2021} provide an approach based on the linear programming formulation of \cite{balke:94}. It should be pointed out that the term ``instrumental variable'' is sometimes used in the error-in-variables regression literature as the name for a second measurement of the missing regressor $X$ \citep[Chapter 6]{carroll2006measurement_nonlinear}. In this work, it signifies a direct cause of $X$ (the more standard definition in causal inference literature). A similar setting was studied in \cite{gultchin2021operationalizing}, where $Z$ operated as a crude intervention on a complex cause $X$ and causal effects under new $Z$ interventions could be computed due to invariance. A family of methods for causal inference with corrupted data is presented by \cite{agarwal:2021}, including a linear error-in-variables formulation with Riesz representers that can be interpreted as estimating causal effects when no confounding is present. An example of effect estimation with observed confounding adjustment is \cite{song:2015}, and an example combining deconfounding and measurement error correction with linear models is \cite{vansteelandt2009}. Finally, measurement error has also been considered in the causal graph discovery literature \citep[e.g.][]{sil:06,zhang_uai:2018}. In this case, all observed variables are measurements of some underlying latent causes, and the goal is to learn the causal structure among such latent variables. Parametric assumptions are usually necessary, and no nonlinear causal effect estimators are provided when latent variables are themselves confounded by further hidden common causes.


\section{Background}

Throughout, we use capital letters (e.g. $A$) to denote a random variable on a measurable space. We denote measurable spaces by calligraphic letters ($\mathcal{A}$), with one exception: $\mathcal{P}$, which we use to denote a probability measure. We use lowercase letters to denote realizations of a random variable ($A=a$). We will use the structural causal model (SCM) formulation of \cite{pearl2009causality}, where causal relationships are represented as directed acyclic graphs (DAGs). The operator $do(\cdot)$ is defined in these models to describe the process of forcing a random variable to take a particular value, which isolates its effect on downstream variables (i.e. $\E[Y~|~do(A=a)]$ describes the isolated effect of $A$ on $Y$). 

Our goal is to estimate the average treatment effect (ATE) $\mathbb E[Y~|~do(X = x)]$ given the graph in Figure~\ref{fig:merror_iv_graph} (equivalent to the structural function $f: \mathcal{X} \longrightarrow \mathcal{Y}$). Here $X,\epsilon$ are unobserved. We only have access to an instrument $Z$, the effect $Y$, and corrupted measurements of $X$: $M$ and $N$.

\subsection{Structural Assumptions on $p(x, y~|~z)$} 

When the treatment $X$ is observed and an instrument $Z$ is available, the structural function is identified by

\begin{assumption}\label{assp: iv_additive_error}
$Y = f(X) + \epsilon$ and $\E[\epsilon|Z] = 0$
\end{assumption}
\begin{assumption}\label{assp: relevance_of_instrument}
$p(x|z)$ is not constant in $z$.
\end{assumption}
Under Assumptions \ref{assp: iv_additive_error} and \ref{assp: relevance_of_instrument}, the structural function satisfies the following equation $\mathcal{P_{Z}}$-\text{almost surely.}:
\begin{align}
    \E[Y|Z] = \E[f(X)|Z] = \int f(x)d\mathcal{P_{X|Z}}
\end{align}
Typical methods fall into two categories: 1) two-stage methods (\cite{kiv, Hartford17:DIV, xu2020}): first identify the conditional distribution directly or through estimating conditional expectations of basis functions; this is followed by identifying $f$ under the identified conditional distribution of vector of conditional expected values of basis functions; 2) moment-based methods (\cite{zhang2020maximum, Bennett19:DeepGMM}): estimate $f$ using moment conditions generated by the conditional moment restriction: $\E[(Y-f(X))g(Z)]=0$ $\forall g$ measurable.  A practical difference between two-stage methods and moment-based methods is that two-stage methods require separate data for each stage, and the first stage does not require the outcome observations $Y$, whereas moment-based methods require data for all variables simultaneously. In this work, we seek to identify the measurement process before identifying the structural function. We thus naturally adopt the two-stage framework since the measurement process requires only the instrument and the measurements, and not the outcome labels. 


\subsection{Reproducing kernel hilbert spaces} 
For any space $\mathcal{S} \in \{\X, \Y, \cM, \cN, \Z\}$, let $k: \mathcal{S} \times \mathcal{S} \rightarrow \mathbb{R}$ be a positive definite kernel. We denote by $\phi$ its associated canonical feature map $\phi(x)=k(x, \cdot)$ for any $x \in \mathcal{S}$, and $\mathcal{H_S}$ its corresponding Reproducing Kernel Hilbert Space (RKHS) of real-valued functions on $\mathcal{S}$. The space $\mathcal{H_S}$ is a Hilbert space with inner product $\langle \cdot, \cdot \rangle_{\mathcal{H_S}}$. It satisfies two important properties: (i) $k(x, \cdot) \in \mathcal{H_S}$ for all $x \in \mathcal{S}$, (ii) the reproducing property: for all $h \in \mathcal{H_S}$ and $x\in \mathcal{S}, h(x) = \langle h, k(x, \cdot) \rangle_{\mathcal{H_S}}$. For any distribution $p$ on $\mathcal{S}$, $\mu_p \defeq \int k(x, \cdot) p(x)dx $ is an element of $\mathcal{H_S}$ and is referred to as the kernel mean embedding of $p$ (\cite{Smola07Hilbert}). Similarly, for any conditional distribution $p(x|z)$, $\mu_{\cP_{X|z}} \defeq \int k(x, \cdot)p(x|z)dx$ is a \textit{conditional mean embedding} (CME) of $p(x|z)$ (\cite{song2009hilbert, song2013kernel}); see \cite{Muandet17:KME} for a review. 

\subsection{Structure learning using kernel mean embeddings} \label{subsec: kiv_method}
Provided that the structural function $f$ lies in the RKHS $\cH_{\cX}$, then its conditional expectation under $\cP_{X|Z}$ can be written as $ \E[f(X)|Z]=\langle f, \mu_{\cP_{X|Z}}\rangle_{\mathcal{H_X}}$. In \cite{singh2019kernel}, the conditional mean embedding is estimated by the standard regression formula using the observed samples $\{z_j, x_j\}_{j=1}^{s_1}$ before the structural function $f$ is estimated using a second-stage sample $\{\check{z}_j, \check{y}_j\}_{j=1}^{s_2}$. We present their solution here. 

The CME estimator of $\cP_{X|z}$ is estimated using the samples $\{z_j, x_j\}_{j=1}^{s_1}$
\begin{align}
    \hat{\mu}^{(s_1)}_{\cP_{X|z}} &= \Phi_{X} (K_{ZZ} + s_1 \hat{\lambda} I)^{-1} \Phi'_Z\phi(z) \label{eq: kivs1}
\end{align}
where $\hat{\lambda}$ is the ridge regression hyperparameter chosen using the validation procedure described in \cite[App.7.4.2]{kiv}. $K_{ZZ}$ denote the kernel matrix where $(K_{ZZ})_{jl} = k(z_j, z_l)$, $(\Phi_X)_{(:, j)} = \phi(x_j)$. This is precisely the adaptation of ridge regression to multi-dimensional feature spaces to the case where the number of features can be infinite. Furthermore, if we assume that the structure function lies in an RKHS, then we can learn the function $f$ in two steps of regression: first a regression to get the CME, followed by a regression from the CME to $Y$ to obtain $f$. We go ahead to make this assumption. Importantly, we stress that the purpose of this assumption is for the nonparametric modelling of $f$, and is \textit{not} to do with the correction of measurement error.
\begin{assumption}\label{assump: f_rkhs}
$f \in \cH_{\cX}$
\end{assumption}
Assuming $f\in\cH_{\cX}$, the estimated CME $\hat{\mu}^{(s_1)}_{\cP_{X|z}}$ is used to learn the structural function $f$ by solving the empirical analogue of the following:
\begin{align}
    \E[Y|Z]&=\langle f, \mu_{\cP_{X|z}}\rangle_{\mathcal{H_X}} \label{eq: cme_to_y}
\end{align}
We solve for $f$ via least squares in two stages: 1. Use $\{\check{z}_j, \check{y}_j\}_{j=1}^{s_1}$ to Monte-Carlo estimate $\E[Y|Z]$ and $\mu_{\cP_{X|z}}$ (call the latter $\hat{\mu}^{(s_1)}_{\cP_{X|z}}$); 2. Use $\{\check{z}_j, \check{y}_j\}_{j=1}^{s_2}$ to estimate $f$ via
\begin{align}
    \hat{f}^{(s_2)}(x) &= \hat{\beta}'K_{Xx} \label{eq: kiv_1} \\ 
    \hat{\beta} &= (VV' + s_2 \hat{\xi}K_{XX})^{-1}V\check{y} \label{eq: kiv_2} \\
    V &= K_{XX}(K_{ZZ} + s_1 \hat{\lambda}I)^{-1}K_{Z\check{Z}} \label{eq: kiv_3} 
\end{align}
where $\hat{\xi}$ is a hyperparameter.
Note that $\hat{\mu}^{(s_1)}_{\cP_{X|z}}$ enters in eq.~\eqref{eq: kiv_3}: $V_{jl} = \phi(x_j)^T\Phi_{X} (K_{ZZ} + s_1 \hat{\lambda} I)^{-1} \Phi'_Z\phi(\check{z}_l)= \hat{\mu}^{(s_1)}_{\cP_{X|\check{z}_l}}(x_j)$. We refer the readers to \citet{singh2019kernel} for the full derivation and for tuning $\hat{\xi}$.

This approach works when we observe treatment $X$. 
When $X$ is unobserved, this is not possible. Thus, we propose a method to learn the CME directly from corrupted measurements of $X$; then, $f$ is yielded as a mapping from the learnt CME to $Y$ as in Eq.~\eqref{eq: kiv_1} to Eq.~\eqref{eq: kiv_3}. Our method is detailed in Section~\ref{sec: method}. 
We note that the solution of \citet{singh2019kernel} requires standard conditions for kernel causal learning, which we inherit. For clarity of presentation, we detail them in the Section~\ref*{app: kernel_assumptions} of the Supplementary Materials.


\subsection{Characteristic function identification of a latent variable using mismeasured observations}
The main obstacle in the learning of the CME $\mu_{\mathcal{X|Z}}$ is the lack of observed data of $X$. To this end, we first review a strongly related problem, which is to identify the characteristic function of $p(x|z)$ using corrupted observations $M$ and $N$. The following assumptions are needed.

\begin{assumption}\label{assp: additive_merror}
Measurement errors enter additively:
\begin{align}
    M &= X + \Delta M \label{eq:m}\\
    N &= X + \Delta N  \label{eq:n}
\end{align}
\end{assumption}
\begin{assumption}\label{assp: merror_correlation}
The measurement errors are uncorrelated with each other, $\Delta M$ is uncorrelated with $X$, $\Delta N$ is independent with $X$, and $\epsilon$ is uncorrelated with $\Delta N$:
\begin{align}
&\E[\Delta M | X, \Delta N] =0 \label{eq:M_XN}\\ 
&X \indep \Delta N \label{eq:X_N}\\
&\E[\epsilon|\Delta N] = 0 \label{eq:E_N}
\end{align}
\end{assumption}
As $X$ is unobserved and can be redefined up to any invertible transformation, Eq. (\ref{eq:m}) is not imposing further constraints besides a monotonic relation between $M$ and $X$ in expectation. Eqs.~(\ref{eq:M_XN}) and (\ref{eq:E_N}) are weaker formulations of conditional independence statements $\Delta M \indep \{X, \Delta N\}$ and $\epsilon \indep \Delta N$. 



\begin{remark}
Eq.~\eqref{eq:n} is a restrictive assumption. However, we point out that it can be relaxed and the relaxed setting can be reduced to our setting. Thus we focus on the simplified setting where future methods can extend from; we discuss one way to relax the assumption in Section~\ref*{app: relax_merror_assumptions} of the Supplementary Materials.
\end{remark}



With two measurements, \cite{schennach04} provides a constructive estimator for the moments of latent variables. Our work uses a special case of their theorem which we state below.
\begin{assumption}\label{assp: schennach_technical_assump}
$\E[|X|] < \infty$ and $\E[|\Delta M|] < \infty$
\end{assumption}


\begin{corollary}\label{cor: conditional_psi}
Given Assumptions \ref{assp: additive_merror}-\ref{assp: schennach_technical_assump}, the characteristic function of $X$ given $Z = z$, i.e. $\psi_{\cP_{X|z}}(\alpha)$, is equal to 
\begin{align}
    \overbrace{\E_{\cP_{X|z}}[e^{i\alpha X}]}^{\psi_{\cP_{X|z}}(\alpha) :=} &= \exp \left(\int_0^{\alpha} i \frac{\E[Me^{i\nu N}|z]}{\E[e^{i\nu N}|z]}d\nu\right).\label{eq: char_cme}
\end{align}
\end{corollary}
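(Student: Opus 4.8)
The plan is to turn the two conditional moments on the right-hand side into a first-order linear ODE for $\log \psi_{\cP_{X|z}}$, and then integrate it with the boundary condition $\psi_{\cP_{X|z}}(0)=1$. Throughout, every expectation is taken conditionally on $Z=z$. A preliminary point I would settle first is that the \emph{unconditional} (in)dependences posited in Assumptions~\ref{assp: additive_merror}--\ref{assp: merror_correlation} survive conditioning on $Z$: in the graph of Figure~\ref{fig:merror_iv_graph} the exogenous noises $\Delta M,\Delta N$ feed only into the sink nodes $M,N$, so they are jointly independent of $(X,\epsilon,Z)$; in particular $X \indep \Delta N \mid Z$ and $\E[\Delta M \mid X,\Delta N, Z]=0$. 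These conditional versions are what the algebra actually consumes.

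With that in hand, I would compute the numerator and denominator separately. Abbreviating $\psi(\nu):=\psi_{\cP_{X|z}}(\nu)=\E[e^{i\nu X}\mid z]$ and $\psi_{\Delta N}(\nu):=\E[e^{i\nu \Delta N}\mid z]$, substituting $N=X+\Delta N$ and factorizing via $X\indep\Delta N\mid Z$ gives $\E[e^{i\nu N}\mid z]=\psi(\nu)\,\psi_{\Delta N}(\nu)$. For the numerator I substitute $M=X+\Delta M$ and split into two pieces. The $\Delta M$ piece vanishes by iterated expectations: conditioning on $(X,\Delta N,Z)$ and applying $\E[\Delta M\mid X,\Delta N,Z]=0$ kills $\E[\Delta M\,e^{i\nu X}e^{i\nu\Delta N}\mid z]$. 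The $X$ piece factorizes again through $X\indep\Delta N\mid Z$ into $\E[Xe^{i\nu X}\mid z]\,\psi_{\Delta N}(\nu)$, and since $\psi'(\nu)=\E[iXe^{i\nu X}\mid z]$ we have $\E[Xe^{i\nu X}\mid z]=-i\,\psi'(\nu)$. Hence $\E[Me^{i\nu N}\mid z]=-i\,\psi'(\nu)\,\psi_{\Delta N}(\nu)$. The differentiation of $\psi$ under the expectation is exactly what Assumption~\ref{assp: schennach_technical_assump} licenses ($\E[|X|]<\infty$), while $\E[|\Delta M|]<\infty$ guarantees the split integrals converge.

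Taking the ratio, the $\psi_{\Delta N}$ factors cancel and I obtain $i\,\E[Me^{i\nu N}\mid z]/\E[e^{i\nu N}\mid z] = i\cdot(-i)\psi'(\nu)/\psi(\nu)=\psi'(\nu)/\psi(\nu)=\tfrac{d}{d\nu}\log\psi(\nu)$. Integrating over $[0,\alpha]$ and using $\psi(0)=1$ gives $\int_0^\alpha i\,\E[Me^{i\nu N}\mid z]/\E[e^{i\nu N}\mid z]\,d\nu=\log\psi(\alpha)$, and exponentiating yields the stated identity.

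The main obstacle is analytic rather than algebraic: the ratio and the complex logarithm are only meaningful where $\E[e^{i\nu N}\mid z]=\psi(\nu)\psi_{\Delta N}(\nu)$ is nonzero along the entire segment $[0,\alpha]$, and characteristic functions can vanish. Near $\nu=0$ continuity together with $\psi(0)\psi_{\Delta N}(0)=1$ secures this, but extending to arbitrary $\alpha$ requires the (standard, and implicit in Schennach's setting) non-vanishing of the relevant characteristic functions, plus a careful choice of the continuous branch of $\log$ normalized so that $\log\psi(0)=0$. I would also verify explicitly that only the \emph{mean}-independence conditions are used, not full independence of $\Delta M$ — they suffice because a single first moment of $\Delta M$ and a single derivative of $\psi$ are all that ever appear.
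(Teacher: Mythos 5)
Your proof is correct, but it takes a genuinely different route from the paper's: the paper's entire proof of Corollary~\ref{cor: conditional_psi} is a citation --- it notes that the identity is \cite[Theorem 1]{schennach04}, originally phrased for the marginal law $p(x,m,n)$, and asserts it carries over to the law conditional on $Z=z$. You instead re-derive the identity from first principles in the conditional setting (essentially reproducing Schennach's argument conditionally on $Z$): factorize $\E[e^{i\nu N}\mid z]=\psi(\nu)\psi_{\Delta N}(\nu)$ via $X\indep\Delta N\mid Z$, kill the $\Delta M$ contribution by iterated expectations using $\E[\Delta M\mid X,\Delta N,Z]=0$, identify $\E[Xe^{i\nu X}\mid z]=-i\,\psi'(\nu)$, then integrate the logarithmic derivative from $0$ with $\psi(0)=1$. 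What your version buys is that it makes explicit two points the paper's one-line proof silently assumes: (i) Assumptions~\ref{assp: additive_merror}--\ref{assp: schennach_technical_assump} are stated \emph{marginally} (Eqs.~\eqref{eq:M_XN}--\eqref{eq:E_N}), and (mean-)independence is not in general preserved under conditioning, so one must argue --- as you do, from exogeneity of $\Delta M,\Delta N$ in the structural model of Figure~\ref{fig:merror_iv_graph} --- that the conditional-on-$Z$ versions hold, which is exactly what the algebra consumes; and (ii) the ratio and the logarithm are only defined where $\E[e^{i\nu N}\mid z]$ is nonvanishing on $[0,\alpha]$ and a continuous branch of $\log$ with $\log\psi(0)=0$ is chosen, a regularity condition inherited from Schennach's setting and not restated in the corollary. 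The paper's approach buys brevity and offloads all such regularity to the cited theorem; yours is self-contained, verifies the reduction from the marginal to the conditional statement rather than asserting it, and surfaces precisely which hypotheses do the work (only mean-independence of $\Delta M$, full independence of $\Delta N$, and the first-moment bounds licensing differentiation under the expectation).
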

\begin{proof}
Follows directly from \cite[Theorem 1]{schennach04}, where the original phrased the equality for the marginal distribution $p(x, m, n)$.
\end{proof}
Since characteristic functions are exact representations of probability distributions, Corollary~\ref{cor: conditional_psi} says that we may model $\cP_{X|z}$ through modelling $\cP_{M,N|z}$ and $\cP_{N|z}$, and the mathematical relation is specified by Eq.~\eqref{eq: char_cme}.





\section{Method} \label{sec: method}
In this section we will show how to recover $\mathbb E[Y~|~do(X \!=\! x)]$. To do so, recall that all we need is the kernel mean embedding $\mu_{\cP_{X|Z}}$. We begin by demonstrating that estimating $\mu_{\cP_{X|Z}}$ boils down to estimating the characteristic function $\psi_{\cP_{X|Z}}$. We then introduce a trick for solving integral equations that we call the \emph{differentiation trick} which allows us to estimate $\psi_{\cP_{X|Z}}$ without explicitly estimating the integral. Finally, we give a full procedure for estimating $\mathbb E[Y~|~do(X \!=\! x)]$ and describe advantages of our approach.




\subsection{From kernel mean embeddings to characteristic functions} \label{subsec: kme_cf}
For simplicity, we limit our description to $\R$. However, all of the following arguments can be extended trivially to $\R^d, d>1$. First recall the Fourier transform:
\begin{align*}
    \tilde{h}(\alpha) &= \frac{1}{2\pi} \int_{-\infty}^{\infty} h(x)e^{-i\alpha x}dx
\end{align*}
and the inverse Fourier transform:
\begin{align*}
    h(x) &= \int_{-\infty}^{\infty} \tilde{h}(\alpha)e^{i\alpha x} d\alpha .
\end{align*}

Further, we assume the following.
\begin{assumption}[Symmetric, characteristic and translation-invariant kernels] \label{assump: kernel_additional}
$k(x, \cdot), k(m, \cdot), k(n, \cdot)$ are symmetric, characteristic and translation-invariant kernels.
\end{assumption}
Kernel symmetry is a standard assumption in ML as kernel functions are generally real. Characteristic kernels allow us to embed probability distributions uniquely in an RKHS. Translation-invariant kernels allow us to consider the probability measure associated with kernel functions.

Under Assumption~\ref{assump: kernel_additional}, we can write $k(x,y)=k(x-y)$ and $k(t)$ is positive definite. By Bochner's theorem, we know that $k$ can be written as the Fourier transform of a unique measure $\tilde{k}$:
\begin{align*}
    k(t) &= \frac{1}{2\pi} \int_{-\infty}^{\infty} e^{-i\alpha t}\tilde{k}(\alpha) d\alpha \\
    \text{i.e. }\; k(x, y) &= \int_{-\infty}^{\infty} e^{-i\alpha (x-y)} q(\alpha)d\alpha
\end{align*}
where $q(\alpha) := \frac{1}{2\pi}\tilde{k}(\alpha)$.
As illustrated in e.g. \cite{fukumizu2008lecturenotes}, we may construct an RKHS on the entire real line using Fourier transforms as feature maps:
\begin{align*}
    &\mathcal{H}_X = \left\{f\in \cL^2(\mathbb{R}, dx) \Bigg| \int_{-\infty}^{\infty} \frac{\left|\tilde{f}(\alpha)\right|^2}{q(\alpha)}d\alpha < \infty\right\}\\
    &\langle f, g \rangle_{\cH_X} = \int_{-\infty}^{\infty} \frac{\tilde{f}(\alpha) \overline{\tilde{g}(\alpha)}}{q(\alpha)}d\alpha
\end{align*}
Now consider the Fourier transform of $k(x, \cdot)$, where $x$ is fixed. Since we know that $k(x,y) = \int e^{-i\alpha x} e^{i\alpha y} q(\alpha) d\alpha$, by inspection we realise $\tilde{k}(x, \alpha) = e^{-i\alpha x} q(\alpha)$, recovering the identity that $k(x, y) = \int_{-\infty}^{\infty} \frac{e^{-i\alpha x} q(\alpha) e^{i\alpha y} q(\alpha)}{q(\alpha)} d\alpha = \langle k(x, \cdot), k(y, \cdot) \rangle_{\cH_X}$. 

Recall the definition of the conditional mean embedding of $\mathcal{P}_{X|z}$ for a particular $z$: $\mu_{\mathcal{P}_{X|z}}(y) \defeq \int k(x, y)p(x|z) dx$. When all variables are observed, the conditional mean embedding (CME) can be estimated by samples $\{x_j, z_j\}_{j=1}^s$:
\begin{align}
    \hat{\mu}^{(s)}_{X|z}(y) 
    &=\sum_{j=1}^s \hat{\gamma}_j^{(s)}(z) k(x_j, y) \label{eq: empirical_cme}
\end{align}
where
\begin{align}
    \hat{\gamma}_j^{(s)}(z) = (K_{ZZ}+ s\hat{\lambda}^{(s)} I)^{-1}K_{Zz} \label{eq: gamma}
\end{align}


Taking the Fourier transform of $\hat{\mu}^{(s)}_{X|z}(y)$:
\begin{align*}
    \tilde{\hat{\mu}}^{(s)}_{X|z}(\alpha) &= \sum_{j=1}^s \hat{\gamma}_j^{(s)}(z) e^{-i\alpha x_j} q(\alpha)\\
    &= q(\alpha)\underbrace{\sum_{j=1}^s \hat{\gamma}_j^{(s)}(z) e^{-j\alpha x_j}}_{\eqdef \hat{\psi}^{(s)}_{ \mathcal{P}_{X|z}}(-\alpha)}
\end{align*}
Define the $s-$sample estimate of the characteristic function $\hat{\psi}^{(s)}_{\mathcal{P}_{X|z}}(\alpha) \defeq \sum_{j=1}^s \hat{\gamma}_j^{(s)}(z)e^{i\alpha x_j}$ with $\{x_j\}_{j=1}^s \sim \mathcal{P}_{X|z}$. Next, we show that $\hat{\psi}^{(s)}_{\mathcal{P}_{X|z}} \longrightarrow \psi_{\mathcal{P}_{X|z}}$ in $\mathcal{L}^2(\mathbb{R}, q)$ if and only if $\hat{\mu}_{\mathcal{P}_{X|z}}^{(s)} \longrightarrow \mu_{\mathcal{P}_{X|z}}$ in $\mathcal{H}_X$. 
\begin{thm}[Convergence in CME is identical to convergence in characteristic function].\label{prop: charfun_cme_equiv}
Let $k: \mathcal{X} \times \mathcal{X} \longrightarrow \mathbb{R}$ be a symmetric, positive definite, and translationally invariant characteristic kernel, then for a (conditional) probability measure on $\mathcal{X}$, denoted $\mathcal{P}_{X|z}$, we have that $\hat{\psi}^{(s)}_{\mathcal{P}_{X|z}} \longrightarrow \psi_{\mathcal{P}_{X|z}}$ in $\mathcal{L}^2(\mathbb{R}, q)$ if and only if $\hat{\mu}_{\mathcal{P}_{X|z}}^{(s)} \longrightarrow \mu_{\mathcal{P}_{X|z}}$ in $\mathcal{H_X}$. Moreover, whenever either converges, the other converges at the same rate.
\end{thm}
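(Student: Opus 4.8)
The plan is to prove the statement as an \emph{exact equality of norms} rather than a two-sided bound: I will use the Fourier transform to identify the error of the conditional mean embedding in $\mathcal{H}_X$ with the error of the characteristic function in $\mathcal{L}^2(\mathbb{R}, q)$. Once such an identity holds for every $s$, both directions of the equivalence and the equality of convergence rates are immediate consequences.

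First I would obtain the spectral description of the population embedding, mirroring the computation already carried out for $\hat{\mu}^{(s)}_{X|z}$ in the excerpt. Writing $\mu_{\mathcal{P}_{X|z}}(y) = \int k(x,y)\,p(x|z)\,dx$, taking the Fourier transform in $y$, and exchanging the order of integration, the identity $\tilde{k}(x,\alpha) = e^{-i\alpha x} q(\alpha)$ gives $\tilde{\mu}_{\mathcal{P}_{X|z}}(\alpha) = q(\alpha)\,\psi_{\mathcal{P}_{X|z}}(-\alpha)$, the population analogue of $\tilde{\hat{\mu}}^{(s)}_{X|z}(\alpha) = q(\alpha)\,\hat{\psi}^{(s)}_{\mathcal{P}_{X|z}}(-\alpha)$. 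Subtracting, the error $g^{(s)} \defeq \hat{\mu}^{(s)}_{\mathcal{P}_{X|z}} - \mu_{\mathcal{P}_{X|z}}$ has Fourier transform $\tilde{g}^{(s)}(\alpha) = q(\alpha)\big(\hat{\psi}^{(s)}_{\mathcal{P}_{X|z}}(-\alpha) - \psi_{\mathcal{P}_{X|z}}(-\alpha)\big)$.

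The core step is to substitute this into the Fourier-space formula for the RKHS norm, $\|g^{(s)}\|^2_{\mathcal{H}_X} = \int |\tilde{g}^{(s)}(\alpha)|^2/q(\alpha)\,d\alpha$. One power of $q(\alpha)$ cancels, leaving $\int q(\alpha)\,|\hat{\psi}^{(s)}_{\mathcal{P}_{X|z}}(-\alpha) - \psi_{\mathcal{P}_{X|z}}(-\alpha)|^2\,d\alpha$. Since $k$ is real, symmetric and translation-invariant, $k(t)=k(-t)$, and hence its spectral density $q$ is even; the change of variable $\alpha \mapsto -\alpha$ then leaves the integral invariant and yields exactly $\|\hat{\psi}^{(s)}_{\mathcal{P}_{X|z}} - \psi_{\mathcal{P}_{X|z}}\|^2_{\mathcal{L}^2(\mathbb{R}, q)}$. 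This proves $\|\hat{\mu}^{(s)}_{\mathcal{P}_{X|z}} - \mu_{\mathcal{P}_{X|z}}\|_{\mathcal{H}_X} = \|\hat{\psi}^{(s)}_{\mathcal{P}_{X|z}} - \psi_{\mathcal{P}_{X|z}}\|_{\mathcal{L}^2(\mathbb{R}, q)}$ for all $s$, after which both claims of the theorem follow at once.

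The main obstacle is analytic bookkeeping rather than the algebra, which is a one-line isometry. I would make sure the weighted norm is well-defined, which requires $q(\alpha)>0$ on the relevant spectral support; this is what the characteristic-kernel assumption secures (for translation-invariant kernels, characteristicness corresponds to the spectral measure having full support), and it is also what makes the embedding $\mu \mapsto \psi$ faithful. I would further justify the Fubini exchange when moving the Fourier transform inside the defining integral of $\mu$, and confirm that both $\mu_{\mathcal{P}_{X|z}}$ and its estimate lie in $\mathcal{H}_X$, i.e. $\int q\,|\psi|^2 < \infty$; for the empirical embedding this is automatic as a finite combination of feature maps, and for the population object it reduces to the integrability already implicit in the RKHS construction. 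No inequality or rate estimate is actually required, since the two errors coincide identically.
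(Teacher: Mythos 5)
Your proposal is correct and follows essentially the same route as the paper's own proof: computing $\tilde{\mu}_{\mathcal{P}_{X|z}}(\alpha) = q(\alpha)\,\psi_{\mathcal{P}_{X|z}}(-\alpha)$ via Fubini, substituting into the Fourier-space RKHS norm so that one factor of $q$ cancels, and using evenness of $q$ (from kernel symmetry) to obtain the exact identity $\|\hat{\mu}^{(s)}_{\mathcal{P}_{X|z}} - \mu_{\mathcal{P}_{X|z}}\|_{\mathcal{H}_X} = \|\hat{\psi}^{(s)}_{\mathcal{P}_{X|z}} - \psi_{\mathcal{P}_{X|z}}\|_{\mathcal{L}^2(\mathbb{R},q)}$, from which both directions and the equal rates follow. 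Your added remarks on positivity of $q$ and integrability are sound extra care but do not change the argument.
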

We provide the proof in Section~\ref*{app: proofs} of the Supplementary Materials.

This means learning the characteristic function $\psi_{\cP_{X|z}}$ in $\cL^2(\mathbb{R}, q)$ simultaneously gives us a precise estimate of the kernel mean embedding $\mu_{\cP_{X|z}}$ in $\cH_{X}$.

\subsection{Learning the Latent Characteristic Function}

We now show how to learn the latent characteristic function which will give us the latent kernel mean embedding.

\textbf{Notation.} To lighten notation, from now on we will use $\hat{f}$ to denote the empirical estimate of a quantity $f$, and only use $\hat{f}^{(s)}$ when we need to be specify the sample size $s$.

\textbf{What if we are able to observe $X$?} When $X$ is observed, $\hat{\mu}^{(s)}_{\mathcal{P}_{X|z}}$ can be obtained directly and it can be shown that $\hat{\mu}^{(s)}_{\mathcal{P}_{X|z}} \longrightarrow \mu_{\mathcal{P}_{X|z}}$ as $s \longrightarrow \infty$. By Theorem~\ref{prop: charfun_cme_equiv}, the same samples and $\lambda$ which closely estimate the CME $\mu_{\cP_{X|z}}$ would also closely estimate the characteristic function $\psi_{\mathcal{P}_{X|z}}$, and vice versa \footnote{This should be possible for $z$ from an unseen distribution $\cP_{\check{\mathcal{Z}}}$ provided the unseen distribution has the same support as the training distribution $\cP_Z$.}. Thus, when $s$ is suitably large, we can accurately approximate the right hand side of $\eqref{eq: char_cme}$ as 
\begin{align}
    \exp\left(\int_{0}^{\alpha} i\frac{\E[Me^{i\nu N}|z]}{\E[e^{i\nu N}|z]}d \nu\right) &\approx \sum_{j=1}^s \hat{\gamma}_j(z) e^{i\alpha x_j}   \label{eq: finite_sample_approx}
\end{align}
where $\hat{\gamma}_j(z)$ is specified by Eq.~\eqref{eq: gamma}. Recall that this term also depends on $\hat{\lambda}$. To make this explicit we write $\hat{\gamma}^{\hat{\lambda}}_j(z)$.

\textbf{Solving for $X$.} Given eq.~\ref{eq: finite_sample_approx} we make the following observation: given samples of $\{z_j\}_j$, the estimate $\hat{\psi}_{\mathcal{P}_{X|z}}$ only depends on $\{x_j\}_j$ and $\hat{\lambda}$.


Therefore, we can solve for $\{x_j\}_j,\hat{\lambda}$ by minimising the discrepancy between both sides of Eq.~\eqref{eq: finite_sample_approx} over $\{x_j\}_j,\hat{\lambda}$:

\begin{align}
    \{\hat{x}_j\}_j, \hat{\lambda}_X &=\argmin_{\{x_j\}_j, \hat{\lambda}}\E_{q(\alpha), \cP_{\check{Z}}}\left[\left(
    \sum_{j=1}^s \hat{\gamma}^{\hat{\lambda}}_j(\check{Z}) e^{i\alpha \hat{x}_j}
    - \eta\right)^2\right] \label{eq: original_loss}\\
    \text{with } \eta &= \exp\int_{0}^{\alpha} \left(i\frac{\E[Me^{i\nu N}|\check{Z}]}{\E[e^{i\nu N}|\check{Z}]} d\nu \right) \nonumber
\end{align}

The expectation is taken over $q(\alpha)$ and $\cP_{\check{Z}}$ because, had the $X-$samples been observed, the convergence of characteristic function is in $\cL^2(\mathbb{R}, q)$ and the $\check{Z}$ distribution does not have to equal to the one used to learn the CME, as long as the two have the same support. To estimate $\eta$ requires two components of approximation: a) finite-sample approximation of $\E[e^{i\nu N}|\check{Z}]$ and $\E[Me^{i\nu N}|\check{Z}]$, b) computation of the integral $\int_{0}^{\alpha} \left(i\frac{\E[Me^{i\nu N}|\check{Z}]}{\E[e^{i\nu N}|\check{Z}]} d\nu\right)$, given a). While it is possible to use numerical methods such as quadrature to approximate the integral, we propose to save the second component by differentiation. 

\textbf{The Differentiation Trick.} We now describe a trick for handling intractable integrals when solving a system of equations. First, let us reproduce 
eq.~\eqref{eq: char_cme} below
\begin{align*}
    \overbrace{\E_{\cP_{X|z}}[e^{i\alpha X}]}^{\psi_{\cP_{X|z}}(\alpha) :=} &= \exp \left(\int_0^{\alpha} i \frac{\E[Me^{i\nu N}|z]}{\E[e^{i\nu N}|z]}d\nu\right). 
\end{align*}


We can take the natural logarithm and differentiate both sides of eq.~\eqref{eq: char_cme}, and substitute the samples of $\check{\mathcal{Z}}$:
\begin{align}
   \frac{\E[Xe^{i\alpha X}|\check{z}]}{\E[e^{i\alpha X}|\check{z}]} &= \frac{\E[Me^{i\alpha N}|\check{z}]}{\E[e^{i\alpha N}|\check{z}]} \label{eq: diff_char_cme}
\end{align}

Since differentiation is a many-to-1 operation, we need to verify that the solution to Eq.~\eqref{eq: diff_char_cme} is also the solution to Eq.~\eqref{eq: char_cme}.
\begin{lemma} \label{lemma}
Considering differentiable functions $\mathbb{C}^n \rightarrow \mathbb{C}$. Denote $f'(x):= \frac{d}{dx}f(x)$. Then if $f'=g'$ and $f(a) = g(a) = b, a,b\in\mathbb{C}$, then $f = g$.
\end{lemma}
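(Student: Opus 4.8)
The plan is to reduce the statement to the elementary fact that a function with identically vanishing derivative on a connected domain is constant. Define $h := f - g$. By linearity of differentiation, $h' = f' - g' \equiv 0$, and by the pointwise hypothesis $h(a) = f(a) - g(a) = b - b = 0$. It therefore suffices to prove that $h' \equiv 0$ together with $h(a) = 0$ forces $h \equiv 0$, from which $f = g$ is immediate.

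For a function of a single (real) variable---which is the case relevant to the application, where $\alpha$ traces the segment from $0$ to its endpoint---I would write $h = u + iv$ with $u, v$ real-valued. Then $h' = u' + iv' \equiv 0$ gives $u' \equiv 0$ and $v' \equiv 0$, so by the (real) mean value theorem $u$ and $v$ are each constant; hence $h$ is constant, and evaluating at $a$ yields $h \equiv 0$. Equivalently, and perhaps more transparently, I would invoke the fundamental theorem of calculus along the path of integration, writing $h(x) - h(a) = \int_a^x h'(t)\,dt = 0$, so that $h(x) = 0$ for every $x$ in the domain.

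The main point to be careful about is that the mean value theorem does \emph{not} hold verbatim for complex-valued functions, so the argument must not assert the existence of a single point realizing the average rate of change; instead it must pass through the real and imaginary parts separately (or through the fundamental theorem of calculus), each of which is legitimate. The only genuine hypothesis needed beyond those stated is that the domain be \emph{connected}, so that the value pinned down at $a$ propagates to every other point; for the multivariate reading $\mathbb{C}^n \to \mathbb{C}$ one simply replaces $h' \equiv 0$ by the vanishing of the full gradient and uses that a function with zero gradient on a connected open set is constant. In our application this is automatic: we only ever compare the two sides of Eq.~\eqref{eq: char_cme} along the connected integration domain, and we anchor them at $\alpha = 0$, where both characteristic functions equal $1$, so Lemma~\ref{lemma} certifies that matching log-derivatives (Eq.~\eqref{eq: diff_char_cme}) recovers the original integral equation.
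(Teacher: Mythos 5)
Your proof is correct and takes essentially the same route as the paper's: both reduce to the observation that $f-g$ has identically vanishing derivative, hence is constant, and the constant is forced to zero by $f(a)=g(a)=b$. The only difference is that you supply the justification the paper merely asserts (zero derivative implies constant, via real/imaginary parts with the mean value theorem or via the fundamental theorem of calculus) and you correctly flag connectedness of the domain as the implicit hypothesis making that step valid.
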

\begin{proof}
If $f' = g'$, then $f = g + C$ for some $C\in \mathbb{C}$. But $f(a) - g(a) = b-b=0$, so $C=0$. 
\end{proof}
\begin{thm}\label{thm: unique_solution}
The (conditional) distribution of $X$, denoted by $\mathcal{P}_{X|z}$, which satisfies Eq.~\eqref{eq: diff_char_cme} is unique, and therefore is the same as the solution to \eqref{eq: char_cme}.
\end{thm}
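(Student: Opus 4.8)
The plan is to recognize that \eqref{eq: diff_char_cme} is nothing but the derivative of the logarithm of \eqref{eq: char_cme}, so that recovering \eqref{eq: char_cme} from \eqref{eq: diff_char_cme} is an exercise in integrating a first-order equation whose constant of integration is pinned for free. Write $\psi(\alpha) \defeq \psi_{\cP_{X|z}}(\alpha) = \E[e^{i\alpha X}\mid z]$. Under the finite-moment Assumption~\ref{assp: schennach_technical_assump}, $\psi$ is continuously differentiable and $\E[Xe^{i\alpha X}\mid z] = -i\,\psi'(\alpha)$, so the left-hand side of \eqref{eq: diff_char_cme} equals $-i\,\frac{d}{d\alpha}\log\psi(\alpha)$. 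Denoting the observable right-hand side by $G(\alpha) \defeq \frac{\E[Me^{i\alpha N}\mid z]}{\E[e^{i\alpha N}\mid z]}$, equation \eqref{eq: diff_char_cme} becomes $\frac{d}{d\alpha}\log\psi(\alpha) = i\,G(\alpha)$, i.e. it fixes only the \emph{derivative} of $\log\psi$.

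The key observation is that the missing constant is supplied by the universal normalization of characteristic functions: for any probability measure $\psi(0) = \E[1] = 1$, hence $\log\psi(0) = 0$. Let $h(\alpha) \defeq \int_0^\alpha i\,G(\nu)\,d\nu$ be the exponent appearing in \eqref{eq: char_cme}; then $h(0) = 0$ and $h'(\alpha) = i\,G(\alpha) = \frac{d}{d\alpha}\log\psi(\alpha)$. Since $\log\psi$ and $h$ have the same derivative and agree at $\alpha = 0$, Lemma~\ref{lemma} (with $a = b = 0$) gives $\log\psi = h$, that is $\psi(\alpha) = \exp\!\left(\int_0^\alpha i\,G(\nu)\,d\nu\right)$, which is exactly \eqref{eq: char_cme}. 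Thus any characteristic function solving \eqref{eq: diff_char_cme} must coincide with the right-hand side of \eqref{eq: char_cme}, a fixed function of the observables $\cP_{M,N\mid z}$; by the uniqueness theorem for characteristic functions the distribution $\cP_{X|z}$ is then uniquely determined, and it is the solution of \eqref{eq: char_cme}.

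The main obstacle is the well-definedness of $\log\psi$: the quotient in \eqref{eq: diff_char_cme} and the complex logarithm both require $\psi$ to be non-vanishing along the integration path, and one must be careful about the branch of the complex log. I would circumvent this by clearing denominators and reading \eqref{eq: diff_char_cme} as the linear ODE $\psi'(\alpha) = i\,G(\alpha)\,\psi(\alpha)$, which is well-posed everywhere once $G$ is continuous, a property inherited from the non-vanishing of $\E[e^{i\alpha N}\mid z]$ assumed in Corollary~\ref{cor: conditional_psi}. Standard existence--uniqueness for linear ODEs with the initial condition $\psi(0) = 1$ then yields a single solution without ever dividing by $\psi$, and Lemma~\ref{lemma} is precisely the logarithmic form of this statement. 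That the resulting candidate is genuinely a characteristic function (continuity at the origin, positive-definiteness) need not be re-established here, as it is guaranteed by Corollary~\ref{cor: conditional_psi} via \cite{schennach04}; the extension to $\R^d$ follows by applying Lemma~\ref{lemma} in its stated $\mathbb{C}^n$ form along rays emanating from the origin.
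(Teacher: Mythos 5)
Your proposal is correct and rests on the same core mechanism as the paper's own proof: equation \eqref{eq: diff_char_cme} pins down the derivative of $\log \psi_{\cP_{X|z}}$, the universal normalization $\psi_{\cP_{X|z}}(0)=1$ supplies the missing integration constant, and Lemma~\ref{lemma} concludes. Two differences are worth recording. First, the paper compares two arbitrary solutions of \eqref{eq: diff_char_cme} and shows they must agree, then infers equality with the solution of \eqref{eq: char_cme}; you instead compare any solution directly against the explicit exponential $\exp(\int_0^\alpha i G(\nu)\,d\nu)$, which delivers uniqueness and the identification with \eqref{eq: char_cme} in a single step. Second, and more substantively, you flag an issue the paper's proof passes over in silence: taking $\log$ of a complex-valued, possibly vanishing characteristic function requires justification (non-vanishing of $\psi$ along the integration path, and a choice of branch of the complex logarithm). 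Your repair --- reading \eqref{eq: diff_char_cme} as the linear ODE $\psi'(\alpha) = i\,G(\alpha)\,\psi(\alpha)$ with initial condition $\psi(0)=1$, whose solution is unique by a standard integrating-factor argument that never divides by $\psi$ or takes logarithms --- is a genuine tightening of the published argument rather than a different proof, and Lemma~\ref{lemma} is, as you say, exactly the logarithmic form of that uniqueness statement.
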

The proof relies on the fact that characteristic functions are always $1$ at $\alpha=0$ (Section~\ref*{app: proofs} of the Supplementary Materials).

\textbf{When should one use the differentiation trick?} When estimation for the target function/parameter requires evaluating an intractable integral, one can think of using the differentiation trick. Lemma~\ref{lemma} specifies one condition where this can be done. Note that there are more situations where the differentiation trick can be applied, such as when all functions in the target class have the same normalization constant. We summarize two situations where the differentiation trick can be applied:
\begin{itemize}
    \item When the target function class is itself normalized, or fixed at certain input values. Examples of this which may be of interest to machine learning practitioners are: a) probability densities, which always integrates to 1, b) cumulative distributions, which is always $1$ at $\infty$.
    \item When an invertible transformation of the function class is normalized or fixed at certain inputs. In those cases, one can in principle solve the problem in the normalized function class, and then apply the invertible transformation to go back to original class.
\end{itemize}

\textbf{Towards a sample-based estimator.} As discussed, we may replace $\E[e^{i\alpha X}|\check{z}]$ and $\E[e^{i\alpha N}|\check{z}]$ with their finite-sample estimates $\hat{\psi}_{\mathcal{P}_{X|\check{z}}}$ and $\hat{\psi}_{\mathcal{P}_{N|\check{z}}}$. For $\E[Xe^{i\alpha X}|\check{z}]$ and $\E[Me^{i\alpha N}|\check{z}]$, we realise that $\E[Xe^{i\alpha X}|\check{z}] = \frac{\partial }{\partial \alpha} \E[e^{i\alpha X}|\check{z}]$, and $\E[Me^{i\alpha N}|\check{z}] = \frac{\partial }{\partial \upsilon}\bigg|_{\upsilon=0} \E[e^{i(\alpha N + \upsilon M)}|\check{z}]$. Thus, we replace them with $\frac{\partial }{\partial \alpha} \hat{\psi}_{\cP_{X|z}}(\alpha)$ and $\frac{\partial }{\partial \upsilon}\bigg|_{\upsilon=0} \hat{\psi}_{\cP_{M,N|z}}(\alpha, \upsilon)$ respectively.
The full expressions of $s-$sample estimates for $\hat{\psi}_{\cP_{X|z}}(\alpha)$, $\hat{\psi}_{\cP_{N|z}}(\alpha)$, $\hat{\psi}_{\cP_{M,N|z}}(\upsilon, \alpha)$ and the relevant derivatives are stated in Section~\ref*{app: sample_estimates} of the Supplementary Materials.



Therefore, we arrive at the new objective function:
\begin{align}
&\{\hat{x}_j\}_{j=1}^s, \hat{\lambda}_X = \nonumber \\
&\argmin_{\{x_j\}_{j=1}^s, \hat{\lambda}_X}\E_{q(\alpha), \cP_{\check{Z}}} \left[\left( w_X(\alpha, \check{Z})- w_{MN}(\alpha, \check{Z}) \right)^2\right]  \label{eq: obj}\\
    &\text{with} \hspace{0.5cm}w_X(\alpha, \check{Z}) = \frac{\sum_{j=1}^{s} x_{j} \hat{\gamma}_X(\check{Z})_j e^{i \alpha x_{j}}}{\sum_{j=1}^{s} \hat{\gamma}_X(\check{Z})_j e^{i \alpha x_{j}}} \label{eq: step_2_inputs}\\
    &w_{MN}(\alpha, \check{Z}) = \frac{\sum_{j=1}^{s} m_{j} \hat{\gamma}_{M,N}(\check{Z})_j e^{i \alpha n_{j}}}{\sum_{j=1}^{s} \hat{\gamma}_N(\check{Z})_j e^{i \alpha n_{j}}} \label{eq: step_2_labels}
\end{align}
$w_{MN}$ is the sample estimate for the integrand in $\eta$ from Eq.~\eqref{eq: original_loss}. We can interpret the output values of $w_{MN}$ as the labels for the supervised learning task defined by Eq.~\eqref{eq: obj}, the $(\alpha, \check{z})$ as inputs, and the $\{x_j\}$ and $\hat{\lambda}_X$ are the parameters. As soon as we have obtained the optimal $\{\hat{x}_j\}_{j=1}^s$ and $\hat{\lambda}_X$, we can substitute into Eq.~\eqref{eq: empirical_cme} and Eq.~\eqref{eq: gamma} to obtain the CME estimate $\hat{\mu}_{\cP_{X|z}}$.

\subsection{Algorithm}

We propose \textit{MEKIV}: \textbf{M}easurement-\textbf{E}rror-corrected \textbf{K}ernel \textbf{I}nstrumental \textbf{V}ariable regression. Two independent samples are needed: $\{z_j, m_j, n_j\}_{j=1}^{s_1}$ and $\{\check{z}_j, \check{y}_j\}_{j=1}^{s_2}$.

Thanks to Theorem~\ref{prop: charfun_cme_equiv}, In step 1 of the MEKIV, we use $\{z_j, m_j, n_j\}_{j=1}^{s_1}$ to compute the sample estimates of the conditional kernel mean embeddings of $\cP_{N|z}$ and $\cP_{M,N|z}$, which in large sample size is guaranteed to converge to the ground truth \cite{singh2019kernel}. By Theorem~\ref{prop: charfun_cme_equiv}, this also gives us a sample estimate of the characteristic functions which converges in $\cL^2$ of their measures induced by their respective kernels. 

Step 2 of the MEKIV learns the characteristic function of $\mathcal{P}_{X|Z}$ by optimising for the $X$ samples using the training objective in Eq.~\eqref{eq: obj}. Again by Theorem~\ref{prop: charfun_cme_equiv}, a good estimate of the characteristic function gives us a good estimate of the conditional kernel mean embedding. 

In Step 3, MEKIV uses the learnt kernel conditional mean embedding and the second samples $\{\check{z}_j, \check{y}_j\}_{j=1}^{s_2}$ to estimate the structural function $f$ - equivalent to the stage 2 of the KIV (\cite{singh2019kernel}). 

The pseudocode of our complete algorithm can be found in Algorithm~\ref*{alg: all}~and~\ref*{alg: step_2} in the Supplementary Materials.

\textbf{Step 1.} From the first sample $\{z_j, m_j, n_j\}_{j=1}^{s_1}$, learn the conditional mean embedding of $p(m|z)$ and $p(m, n|z)$ using the result stated in Eq.~\eqref{eq: kivs1}, Section~\ref{subsec: kiv_method}:
\begin{align}
    \hat{\mu}^{(s_1)}_{\cP_{N|z}} (\cdot) &= \sum_{j=1}^{s_1}(\hat{\gamma}^{(s_1)}_N(z))_j k(n_j, \cdot), \\  \text{with} \hspace{0.3cm} \hat{\gamma}^{(s_1)}_N(z)&=(K_{ZZ} + s_1 \hat{\lambda}_{N} I)^{-1}K_{Zz}
\end{align}
Similarly, it can be shown that:
\begin{align}
    \hat{\mu}^{(s_1)}_{\cP_{M,N|z}} (\cdot) &= \sum_{j=1}^{s_1}(\hat{\gamma}_{M,N}(z))_j k((m_j, n_j), \cdot)\label{eq: mn_cme},  \\ \text{where} \hspace{0.3cm} \hat{\gamma}^{(s_1)}_{M,N}(z)&=(K_{ZZ} + s_1 \hat{\lambda}_{M,N} I)^{-1}K_{Zz}
\end{align}
\begin{remark}
\eqref{eq: mn_cme} allows the use of product kernels.
\end{remark}

\textbf{Step 2.}
After obtaining from Step 1 the quantities: $\hat{\gamma}_N$ and $\hat{\gamma}_{MN}$, Step 2 creates samples $\{\alpha_j\}$, $\{\check{z}_j\}$ and $\{(w_{MN})_j\}$. To this end, Step 2 samples $\{\alpha_j\}_{j=1}^{s_2}$ from $q(\alpha)$, and uses $\{\check{z}_j\}_{j=1}^{s_2}$ unseen in Step 1. In general, $\{\check{z}_j\}_{j=1}^{s_2}$ can be drawn from any distribution $\cP_{\check{Z}}$ with the same support as $\cP_Z$. To maximize sample usage, we take all pairs in the cross product $\{\alpha_j\}_{j=1}^{s_2} \times \{z_j\}_{j=1}^{s_2}$, giving $(s_2)^2$ pairs: $\{\alpha_j, \check{z}_j\}_{j=1}^{(s_2)^2}$ - here we overload notation $\{\check{z}_j\}$ to be both before and after taking the cross product. We input each pair of $\{\alpha_j, \check{z}_j\}$ into Eq.~\eqref{eq: step_2_labels} to generate the labels $\{(w_{MN})_j\}_{j=1}^{(s_2)^2}$. The process of sampling $\{\alpha_j\}$ from $q(\alpha)$ has a close connection with the Random Fourier Features literature (\cite{Bach15randomfourierfeatures, sriperumbudur15_optimal_rates_rff, rahimi_recht_rff_07}).

We now seek $\{x_j\}_{j=1}^{s_1}$ and $\hat{\lambda}_X$ in order to minimize the following objective, which is the empirical analogue of Eq.~\eqref{eq: obj}:
\begin{align}
\{\hat{x}_j\}_{j=1}^{s_1}, \hat{\lambda}_X &=\argmin_{\{x_j\}_{j=1}^{s_1}, \hat{\lambda}_X}\sum_{j=1}^{(s_2)^2} \left[\left( w_X(\alpha_j, \check{z}_j)- (w_{MN})_j\right)^2\right] \label{eq: emp_obj}
\end{align}
For clarity, Step 2 is illustrated in Algorithm~\ref*{alg: step_2} (see Supplementary Materials).



\textbf{Step 3.}
Given estimates of $\{x_j\}_{j=1}^{s_1}$ and $\hat{\lambda}_X$, we obtain the empirical estimate $\hat{\mu}_{\cP_{X|z}}$. Along with the samples $\{\check{z}_j, \check{y}_j\}_{j=1}^{s_2}$, we obtain the solution for $\hat{f}^{s_1}$. The procedure is identical to the Stage 2 estimation of KIV \cite{singh2019kernel}, for which we stated the derived estimator in Section~\ref{subsec: kiv_method}. Our solution for $f$ is:
\begin{align}
    \hat{f}^{(s_2)}(x) &= (\hat{\beta})'K_{\hat{X}x}\\
    \text{with } \hat{\beta} &= (VV' + s_2{\hat{\xi}} K_{\hat{X}\hat{X}})^{-1}V\check{y}\\
    V &= K_{\hat{X}\hat{X}}(K_{ZZ} + s_1 \hat{\lambda} I)^{-1}K_{Z\check{Z}}
\end{align}

\subsection{Advantages of MEKIV}
We highlight the benefits of MEKIV:
\begin{itemize}
    \item MEKIV is \textbf{free of distributional assumptions}: as long as the measurement error satisfies the mean-independence conditions in Eq.~\eqref{eq:M_XN}-\eqref{eq:E_N}, the distributions can have any shape.
    \item \textbf{Computational efficiency}: MEKIV models only the CME of $\cP_{X|Z}$, and in particular, no modelling of the full joint distribution $\cP_{X|Z,M,N}$ as is commonly done in standard latent variable modelling.
    \item \textbf{Ease of implementation}: Unlike standard latent variable modelling, which is typically hard to train due to the large number of hyperparameters, MEKIV is easy to implement and works stably without large efforts in tuning.
\end{itemize}




\section{Experiments}
\begin{figure*}[t]
\centering
\includegraphics[width=\textwidth]{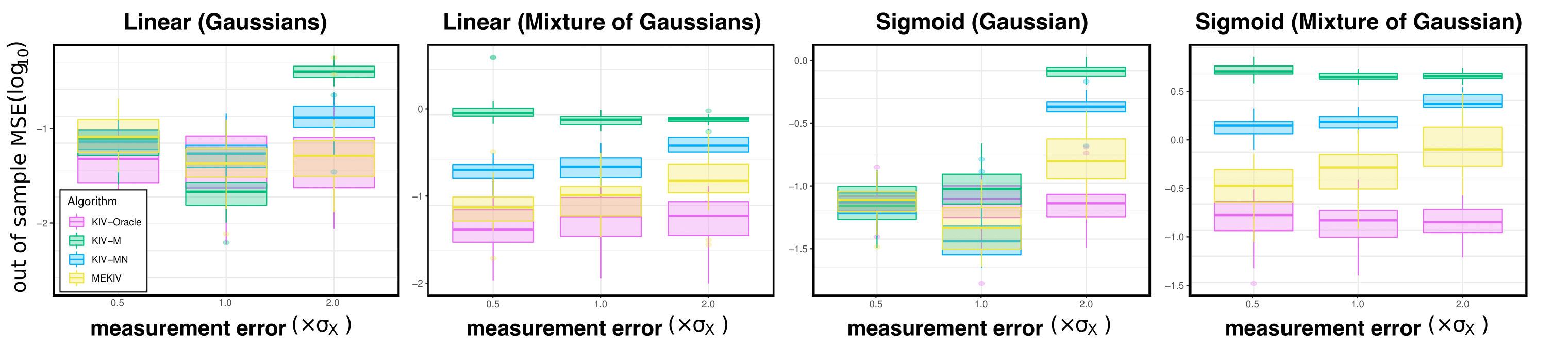}
\caption{Out of sample MSEs ($\log_{10}$) for linear and sigmoid designs.}
\label{fig:linear_sigmoid}
\end{figure*}

\begin{figure*}[t]
\centering
\includegraphics[width=0.8\textwidth]{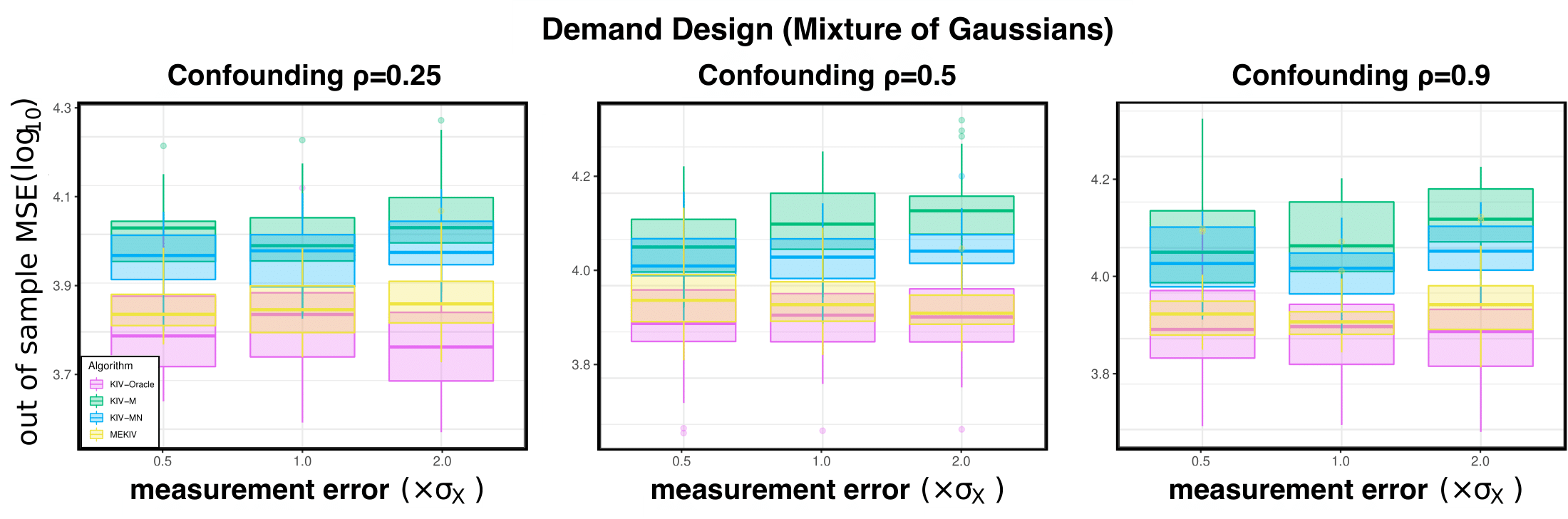}
\caption{Out of sample MSEs ($\log_{10}$) for demand design.}
\label{fig:demand_mog}
\end{figure*}

In this section we evaluate the empirical performance of MEKIV across multiple designs and against baselines. In particular, we compare to three baselines: A) KernelIV \cite{singh2019kernel} with ground truth X provided from an oracle (KIV-Oracle); B) KernelIV taking $M$ as the true treatment observations (KIV-M); C) since taking the average of independent errors reduces the error variance, we also compare with KernelIV taking $(M+N)/2$ as the true treatment observations (KIV-MN).

We run each estimator on three designs. The \textit{linear} design \cite{chen2017optimal} involves learning the structural function $f(x) = 4x - 2$, where $X$ is unseen and we are only given corrupted measurements of treatment $(M, N)$, continuous instrument $Z$, and observations of outcome variables $Y$ which is confounded with the true treatments $X$. The \textit{sigmoid} design \cite{chen2017optimal} involves learning the structural $f(x) = \ln(|16x - 8| + 1)\cdot sgn(x-0.5)$ under the same data generating process otherwise. The \textit{demand} design \cite{Hartford17:DIV} involves learning demand function $h(p,t,s) = 100 + (10+p) \cdot s \cdot \psi(t) - 2p$ where $\psi(t)$ is a complicated nonlinear function. A data tuple including the ground truth treatments consists of $(Y, P, T, S, C)$ where $Y$ is sales, $P$ is price, $T$ is time of year, $S$ is customer sentiment (a discrete variable), and $C$ parameterizes the supply cost shift. A parameter $\rho \in \{0.25, 0.5, 0.9\}$ calibrates the confounding level of $P$ by supply-side market forces. We set $X \defeq (P, T, S)$ and instruments are $Z \defeq(C, T, S)$. 

Since the originally proposed design is one where $X$ is observed, we construct $M$ and $N$ from $X$ and we mask $X$ from all algorithms except KIV-Oracle. For the demand design where $X$ is 3-dimensional, we mask only the dimension corresponding to $P$. For each design we construct $M$ and $N$ from adding noise on $X$. We analyze the robustness of MEKIV in two dimensions. First, we vary the measurement error distribution: we implement a \textit{Gaussian} additive noise design and a multi-modal \textit{Mixture of Gaussian} additive noise design where we mix two Gaussian distributions, centred at twice the standard deviation of $X$ away from 0 on either side. Second, for each measurement error distribution, we vary their standard deviation. For both designs, we set the standard deviation of the Gaussian(s) to be $\{0.5, 1, 2\}$ times the standard deviation of the ground truth $X$.


For the linear and sigmoid design, we implement 30 simulations for each algorithm, measurement error distribution (merror type) and measurement error standard deviation (merror level). For the demand dataset, due to time constraints, we implement 30 simulations for the Mixture of Gaussian measurement error distribution and 10 for the Gaussian distribution, for each algorithm and measurement error standard deviation. We calculate MSE with respect to the true structural function $f$. Figure~\ref{fig:linear_sigmoid},~\ref{fig:demand_mog} and Figure~\ref*{fig:demand_gaussian} (Supplementary Materials) plots the results in each design, measurement error distribution type, and measurement error level. We expect KIV-Oracle to be the best across all methods and its performance is viewed as an upper bound for the other algorithms. MEKIV beats all other baselines in the highest measurement error level setting and is robust to non-classical measurement error as demonstrated by its performance under Mixture of Gaussian error.



\section{Exemplary Real World Scenarios}
\textbf{Measurement error from survey data: effect of income on children's cognitive outcome.} \cite{dahl_lochner_2012} investigated the impact of family income on children’s development. The primary concern with using a regression-type method is that family income and children’s development is confounded by other family characteristics, such as parents’ cognitive ability. Dahl and Lochner thus proposed the state’s Earned Income Tax Credit (EITC) scheme as an instrumental variable to correct for the confounding. Here, they exploit the fact that the EITC scheme expands over the years, so, via this, they can capture the variation of total family income independent from that caused by intrinsic family characteristics. They base their analysis on the panel dataset in the Children of the National Longitudinal Survey of Youth. Survey data are known to contain measurement errors (\cite{carroll2006measurement_nonlinear}). Moreover, the family income measured by the survey in two consecutive years can be posed as repeated measurements of true total family income - assuming that family income is a stable variable that does not vary drastically over a short number of years. Moreover, \cite{dahl_lochner_2012} took a linear model approach for simplicity, but our method can be used for a nonlinear analysis. We apply our method to this dataset and discuss the experiment in detail in Supplementary Materials.

In conclusion, we find that EITC as an instrument is weak, so we prescribe that for a meaningful analysis of the hypothesis of income-on-children's-outcome, a stronger perturbation on the income is required. For example, this can be done by selecting a neighbourhood for which the strength of EITC parameters is increased for some number of years.

\subsection{Understanding how student skills impact their long-term outcome} We consider the following thought experiment: an education trial may take place such that the teachers exercise certain educational strategies to improve the skills of students, in the hopes of ultimately improving their long-term outcome. Such an educational strategy may be the introduction of challenging science projects; this would be $Z$. This may be hypothesized to encourage the students to understand taught concepts better and apply their knowledge to more complex scenarios. The resultant short-term changes in these skills ($X$) may be measured by test scores ($M$, $N$), which contain measurement error. The project should then follow the cohort of students to see what they achieve years later when they reach adulthood ($Y$). We may then run our method to determine the nonlinear causal relationship from skills, such as logical thinking and creative application of knowledge, to long-term outcomes.

\section{Conclusion and Future Work}
We propose MEKIV, an instrumental variable approach for confounded structural learning when the treatment variable is measured with error. We clarify a connection between mean embedding learning and characteristic function learning, showing that the two can be done simultaneously. In constructing our algorithm, we introduce the `differentiation trick' which allows target function recovery while avoiding the computation of an intractable integral. Our method performs well on both Gaussian and non-Gaussian measurement error, and is robust over increasing measurement error levels.

Our method should work well when interventional data is present, but when only purely observational data is present, the instrumental variable assumption may need to be relaxed. This is due to the rarity of instrumental variables in observational studies - instrumental variables in observational studies are often weak, and in some cases they might not even be valid. We leave this for future work. Nevertheless, the ubiquity of instrumental variable assumptions suggests that our approach should be widely applicable; our proposed methodology connecting kernel learning and characteristic function learning also carries independent interest and may find applications outside of the topic of treatment effect estimation considered in this paper.


\subsubsection*{Acknowledgements}
We are grateful to Amlan Banaji and François-Xavier Briol for their insightful comments. We also thank the reviewers for the thoughtful reviews. YZ acknowledges support by the Engineering and Physical Sciences Research Council with grant number EP/S021566/1. This work was partially supported by an ONR grant number N62909-19-1-2096 to RS.

\bibliography{zhu_609}

\newpage\clearpage
\onecolumn
\appendix
\setcounter{assumption}{7}
\setcounter{equation}{28}
\setcounter{figure}{4}
\section{Pseudocode}
See Figure \ref{fig:algorithm} for the pseudocode of our method.
\begin{figure*}[t]
\centering
\begin{minipage}[t]{.41\textwidth}
\begin{algorithm}[H]
\caption{MEKIV training}
\label{alg: all}
\textbf{Input: $M_1$, $M_2$, $N_1$, $N_2$, $Z_1$, $Z_2$, $Y_1$, $Y_2$} , kernelType:=RBF kernel\\
\\
\textbf{Step 1} \textbf{Input}: $M_1$, $N_1$, $Z_1$, kernelType 
\begin{algorithmic}[1]
\State $\hat{\gamma}_N \leftarrow$ KRR($N_1, Z_1$, kernelType) \cite[Stage 1 estimate]{kiv}
\State Stack $M_1$ and $N_1$ to get $M_1N_1$.
\State $\hat{\gamma}_{MN} \leftarrow$ KRR($M_1N_1, Z_1$, kernelType)
\State\Return $\hat{\gamma}_{MN}, \hat{\gamma}_N$
\StateX
\end{algorithmic}

\textbf{Step 2} \textbf{Input}: kernelType,  $\gamma_N^{(s_1)}$, $\gamma_{MN}^{(s_1)}$, $M_1$, $N_1$, $Z_2$, number of $\alpha$ samples $\defeq C$,
\begin{algorithmic}[1]
\State Take $q(\alpha)$ as the Inverse Fourier Transform of the kernel rescaled by $\frac{1}{2\pi}$.
\State Take $\{\check{z}_j\}_{j=1}^{s_2}$ to be the set of data points in $Z_2$.
\State $\{\hat{x}_j\}_{j=1}^{s_1}, \hat{\lambda}_X \leftarrow$ OptimiseX1($q(\alpha)$,  $\gamma_N^{(s_1)}$, $\gamma_{MN}^{(s_1)}$, $M_1$, $N_1$, $\{\check{z}_j\}_{j=1}^{s_2}$, $C$)
\State\Return $\{\hat{x}_j\}_{j=1}^{s_1}, \hat{\lambda}_X$
\StateX
\end{algorithmic}

\textbf{Step 3} \textbf{Input}: $\{\hat{x}_j\}_{j=1}^{s_1}, \hat{\lambda}_X$, $Y_2$, $Z_2$, $Z_1$
\begin{algorithmic}[1]
\State $\xi \leftarrow $ KIVStage2Validation \cite[A.5.2]{kiv}
\State $\hat{f} \leftarrow$ KIVStage2($\{\hat{x}_j\}_{j=1}^{s_1}, \hat{\lambda}_X$, $Y_2$, $Z_2$, $Z_1$, $\xi$)
\State \Return $\hat{f}$
\end{algorithmic}
\end{algorithm}
\end{minipage}
\hfill
\begin{minipage}[t]{0.57\textwidth}
\begin{algorithm}[H]
\caption{Step 2: Learning the CME for $\mathcal{P}_{X|Z}$}
\label{alg: step_2}
\textbf{Input}: $q(\alpha)$,  $\gamma_N^{(s_1)}$, $\gamma_{MN}^{(s_1)}$, $M_1$, $N_1$, number of $\alpha$ samples $\defeq s_2$, $\{\check{z}_j\}_{j=1}^{s_2}$, 
\begin{algorithmic}[1]
\Function{OptimiseX1}{}
\State $\{\alpha_j, \check{z}_j, (w_{MN})_j\}_{j=1}^{(s_2)^2}$ = CreateTrainData($q(\alpha)$, $\hat{\gamma}_N$, $\hat{\gamma}_{MN}$, $M_1$, $N_1$ )
\State initialize $\hat{X} = (M_1 + N_1)/2$
\State initialize $\hat{\lambda}_X = \hat{\lambda}_N$
\While{not converged}
\State Use Eq.~\eqref{eq: step_2_inputs} to calculate $\{(w_X)_j\}_{j=1}^{(s_2)^2}$ from $\{\alpha_j, \check{z}_j\}_{j=1}^{(s_2)^2}$. 
\State Compute loss = MSE($\{(w_X)_j, (w_{MN})_j\}_{j=1}^{(s_2)^2}$) \Comment{Eq.\eqref{eq: emp_obj}}
\State Compute $\nabla_{\hat{X}}(loss)$, $\nabla_{\hat{\lambda}_X}(loss)$
\State $\hat{X} \leftarrow \hat{X} - \text{step}\times \nabla_{\hat{X}}(loss) $; $\hat{\lambda}_X \leftarrow \hat{\lambda}_X - \text{step} \times \nabla_{\hat{\lambda}_X}(loss) $
\EndWhile \\
\Return $\hat{X}$, $\hat{\lambda}_X$
\EndFunction
\StateX
\Function{CreateTrainData}{}
\State Sample $\{\alpha_j\}_{j=1}^C$ from $q(\alpha)$
\State Take all pairs in $\{\alpha_j\}_{j=1}^C\times \{\check{z}_j\}_{j=1}^{s_2}$ to get $\{(\alpha_j, \check{z}_j)\}_{j=1}^{C\times s_2}$
\State Substitute $\{(\alpha_j, \check{z}_j)\}_{j=1}^{C\times s_2}$, along with $M_1$, $N_1$, $\hat{\gamma}_N$, $\hat{\gamma}_{MN}$ into Eq.~\eqref{eq: step_2_labels} to calculate the labels $\{w_j\}_{j=1}^{C\times s_2}$.\\
\Return $\{\alpha_j, \check{z}_j, w_j\}_{j=1}^{C\times s_2}$
\EndFunction
\end{algorithmic}
\end{algorithm}
\end{minipage}
\caption{Our proposed algorithm. Algorithm 1 outlines the end-to-end algorithm from training data to the structural estimator $\hat{f}$. Algorithm 2 outlines our main contribution, step 2 of the algorithm where we learn the CME for the latent variable $X$.}
\label{fig:algorithm}
\end{figure*}
\section{Relaxing classical measurement error assumptions} \label{app: relax_merror_assumptions}
In \cite{HU2015392}, the authors assume the noise on the second measurement $N$, is an unknown monotonic polynomial function of $X$ with additive noise. The estimation procedure amounts of first identifying the polynomial function of $X$, before applying a technique similar to \cite{schennach04}. In this work we take the first step to extend the estimator of \cite{schennach04} to the confounded setting, and leave for future work the relaxation on the assumptions of the second measurement.

\section{Further assumptions on kernel identification} \label{app: kernel_assumptions}

We also employ the following technical assumptions to enable causal effect estimation in the latent treatment setting.

\begin{assumption} \label{assump: polish_space}
$\mathcal{Z, X, M, N, Y}$ are measurable, separable Polish spaces.
\end{assumption}
Assumption~\ref{assump: polish_space} is a regularity condition that allows us to define the conditional mean embedding operator.
\begin{assumption}
$Y$ is bounded.
\end{assumption}
\begin{assumption} \label{assump: kernel_reg}
$k(x, \cdot), k(m, \cdot), k(n, \cdot)$ are continuous, bounded by $\kappa >0$, and their feature maps are measurable. (ii) $k(x, \cdot), k(m, \cdot), k(n, \cdot)$ are characteristic kernels.
\end{assumption}
Assumption~\ref{assump: kernel_reg} is a standard assumption employed in kernel causal learning (\cite{kiv} \cite{proximal2021}).

\section{$s-$sample estimates}\label{app: sample_estimates}
For clarity, we state the $s-$sample estimates for $\hat{\psi}_{\cP_{X|z}}(\alpha)$, $\hat{\psi}_{\cP_{N|z}}(\alpha)$, $\hat{\psi}_{\cP_{M,N|z}}(\upsilon, \alpha)$, which are obtained from Kernel Ridge Regression, and the relevant derivatives below:
\begin{align}
    \hat{\psi}_{\cP_{X|z}}(\alpha) &= \sum_{j=1}^s \hat{\gamma}_X(z)_{j} e^{i \alpha x_{j}}\\
        \hat{\psi}_{\cP_{N|z}}(\alpha) &= \sum_{j=1}^s \hat{\gamma}_X(z)_{j} e^{i \alpha n_{j}}\\
            \hat{\psi}_{\cP_{M,N|z}}(\alpha) &= \sum_{j=1}^s \hat{\gamma}_{M,N}(z)_{j} e^{i (\upsilon m_j + \alpha n_{j})}
\end{align}
With:
\begin{align}
            \hat{\gamma}_X(z) &= (K_{ZZ} + s\hat{\lambda}_XI)^{-1}K_{Zz}\\
            \hat{\gamma}_N(z) &= (K_{ZZ} + s\hat{\lambda}_NI)^{-1}K_{Zz}\\
            \hat{\gamma}_{M,N}(z) &= (K_{ZZ} + s\hat{\lambda}_{M,N}I)^{-1}K_{Zz}
\end{align}
And the derivatives:
\begin{align}
            \frac{\partial }{\partial \alpha} \hat{\psi}_{\cP_{X|z}}(\alpha) &= \sum_{j=1}^s ix_j\hat{\gamma}_X(z)_{j} e^{i \alpha x_{j}}\\
            \frac{\partial }{\partial \upsilon}\bigg|_{\upsilon=0} \hat{\psi}_{\cP_{M,N|z}}(\alpha, \upsilon) &= \sum_{j=1}^s im_j\hat{\gamma}_{M,N}(z)_{j} e^{i \alpha n_{j}}
\end{align}
\section{Demand design - further results}
\begin{figure*}[t]
\centering
\begin{minipage}[t]{.28\textwidth}
\centering
\includegraphics[scale=0.38]{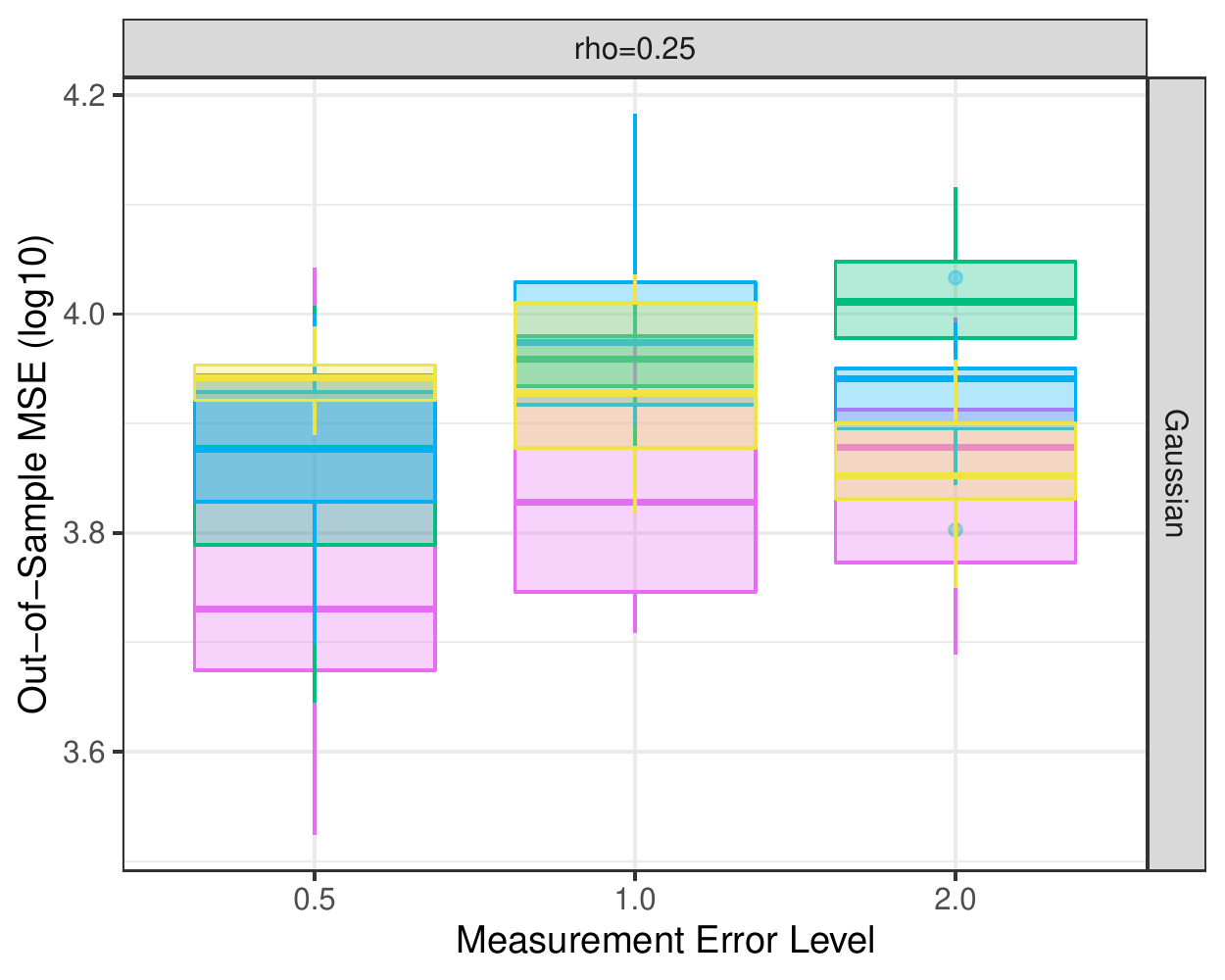}
\caption*{(a) $\rho=0.25$}
\end{minipage}
\begin{minipage}[t]{.28\textwidth}
\centering
\includegraphics[scale=0.38]{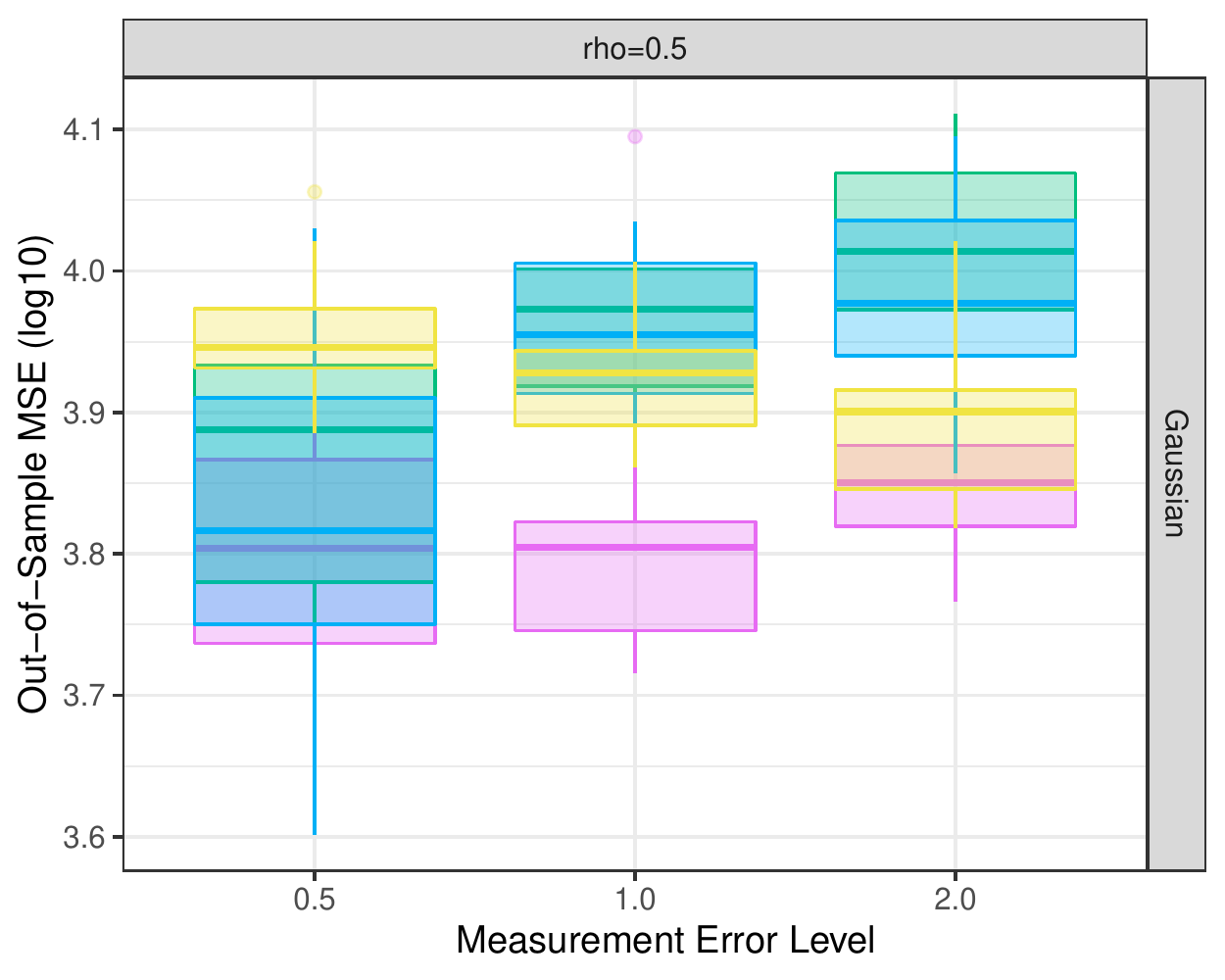}
\caption*{(b) $\rho=0.5$}
\end{minipage}
\begin{minipage}[t]{.28\textwidth}
\centering
\includegraphics[scale=0.38]{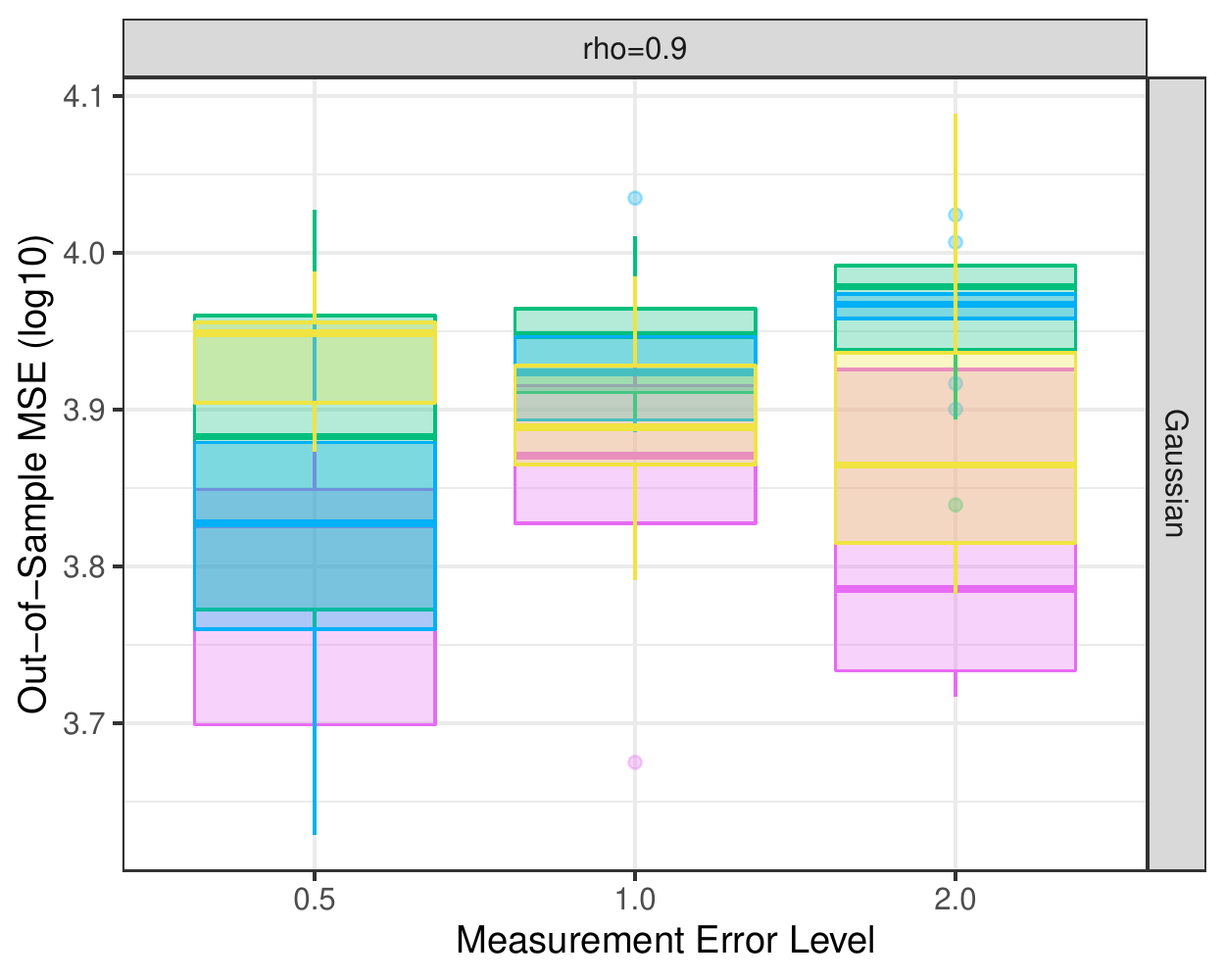}
\caption*{(c) $\rho=0.9$}
\end{minipage}
\begin{minipage}[t]{.14\textwidth}
\centering
\includegraphics[scale=0.7]{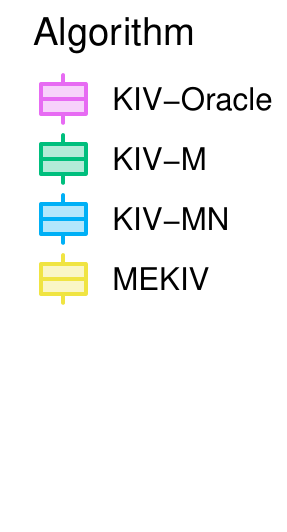}
\end{minipage}
\caption{Demand design - Gaussian Measurement Error.}
\label{fig:demand_gaussian}
\end{figure*}
See Figure~\ref{fig:demand_gaussian} for further results on Demand design with Gaussian measurement error.

\section{Proofs} \label{app: proofs}
\textbf{Proof of Theorem~\ref{prop: charfun_cme_equiv}.}
\begin{proof}
First we note that by Fubini's theorem the Fourier Transform of the (ground truth) conditional mean embedding $\mu_{\cP_{X|z}}$ can be computed as:
\begin{align}
    \tilde{\mu}_{\cP_{X|z}}(\alpha) &= q(\alpha) \psi_{\cP_{X|z}}(-\alpha)
\end{align}
    
\begin{align}
    &\|\hat{\mu}_{\mathcal{P}_{X|z}}^{(s)} - \mu_{\mathcal{P}_{X|z}}\|_{\cH_{\cX}}\\ &= \int_{-\infty}^{\infty} \frac{\left| \tilde{\hat{\mu}}_{\mathcal{P}_{X|z}}^{(s)}(\alpha) - \tilde{\mu}_{\mathcal{P}_{X|z}}(\alpha)\right|^2}{q(\alpha)}d\alpha\\
    &= \int_{-\infty}^{\infty} q(\alpha)\left| \hat{\psi}_{\mathcal{P}_{X|z}}^{(s)}(-\alpha) - \psi_{\mathcal{P}_{X|z}}(-\alpha)\right|^2d\alpha
\end{align}
Since $k$ is a symmetric kernel i.e. even function, $q(\alpha) = \frac{1}{2\pi} \tilde{k}(\alpha)$ is a real and even measure, so
\begin{align}
    &\int_{-\infty}^{\infty} q(\alpha)\left| \hat{\psi}_{\mathcal{P}_{X|z}}^{(s)}(-\alpha) - \psi_{\mathcal{P}_{X|z}}(-\alpha)\right|^2d\alpha\\
    &= \int_{-\infty}^{\infty} q(\alpha)\left| \hat{\psi}_{\mathcal{P}_{X|z}}^{(s)}(\alpha) - \psi_{\mathcal{P}_{X|z}}(\alpha)\right|^2d\alpha\\
    &= \|\hat{\psi}_{\mathcal{P}_{X|z}}^{(s)}(\alpha) - \psi_{\mathcal{P}_{X|z}}(\alpha)\|_{\mathcal{L}^2(\mathbb{R}, q)}
\end{align}
Consequentially, whenever $\|\hat{\mu}_{\mathcal{P}_{X|z}}^{(s)} - \mu_{\mathcal{P}_{X|z}}\|_{\cH_{\cX}} < \epsilon$, $\|\hat{\psi}_{\mathcal{P}_{X|z}}^{(s)} - \psi_{\mathcal{P}_{X|z}}\|_{\mathcal{L}^2(\mathbb{R}, q)} < \epsilon.$ and vice versa. Therefore, $\hat{\psi}^{(s)}_{\mathcal{P}_{X|z}} \longrightarrow \psi_{\mathcal{P}_{X|z}}$ in $\mathcal{L}^2(\mathbb{R}, q)$ if and only if $\hat{\mu}_{\mathcal{P}_{X|z}}^{(s)} \longrightarrow \mu_{\mathcal{P}_{X|z}}$ in $\mathcal{H_X}$. Moreover, if they converge, the convergence happen at the same rate.
\end{proof}

\textbf{Proof of Theorem~\ref{thm: unique_solution}.}
\begin{proof}
Since there is a bijection between characteristic functions and probability distributions, we only have to show that the characteristic function satisfying Eq.~\eqref{eq: diff_char_cme} is unique. 

Now Eq.~\eqref{eq: diff_char_cme} can be rewritten as
\begin{align}
    \frac{d\psi_{\cP_{X|\check{z}}}(\alpha)/d\alpha}{\psi_{\cP_{X|\check{z}}}(\alpha)} &= i g(\alpha) \\
    \frac{d}{d\alpha} \log(\psi_{\cP_{X|\check{z}}}(\alpha)) &= ig(\alpha) \\
    \text{with } g(\alpha) &= \frac{\E[Me^{i\alpha N}|\check{z}]}{\E[e^{i\alpha N}|\check{z}]} \label{eq: thm_uniq_char_1}
\end{align}
Now suppose there is another characteristic function $\psi(\alpha)$ which also satisfies Eq.~\eqref{eq: diff_char_cme} for all $\alpha \in \R$, i.e.
\begin{align}
            \frac{d}{d\alpha} \log(\psi(\alpha)) &= ig(\alpha)  \label{eq: thm_uniq_char_2}
\end{align}
Let $f(\alpha) = \log(\psi_{\cP_{X|z}}(\alpha))$, $g(\alpha) = \log (\psi(\alpha))$. $f'=g'$.
But since characteristic functions are always $1$ at $\alpha=0$, $f(0) = g(0) = \log(1) = 0$. So by Lemma~\ref{lemma} $f = g$. Since $\log$ is an invertible function whose inverse is $\exp$, we must have $\psi_{\cP_{X|z}} = \exp(f) = \exp(g) = \psi$. i.e. the solution to Eq.~\eqref{eq: diff_char_cme} is unique.
\end{proof}

\section{Real-world experiment: Income on children's outcome}
\begin{table}[ht]
    \centering
    \begin{tabular}{|c|c|}
    \hline
        Method  &  MSE \\
       \hline \hline
       KIV-Oracle  & 0.0345 $\pm$ 0.0190\\
       \hline
       MEKIV  & 0.0295 $\pm$ 0.0144\\
       \hline
       KIV-M  & 0.0318 $\pm$ 0.0199\\
       \hline
       KIV-MN  & 0.0310 $\pm$ 0.0142\\
       \hline
    \end{tabular}
    \caption{MSE, income-on-children's-outcome data}
    \label{tab:dahl_lochner}
\end{table}

As described in Section 6 of the main paper, we apply our algorithm to the dataset described in \cite{dahl_lochner_2012}. In order to obtain causal ground truth, we fit a simulation model to the observed data, obtaining the structural equation $f$. We then generate data from the fitted simulation model, for which we now have access to causal ground truth. We then run MEKIV along with the baselines on the generated dataset. Table \ref{tab:dahl_lochner} present the results. We observe that the performance across all methods do not differ much, and in particular the perturbations around the average MSE overlap. This prompts us to look into the performance of the learnt estimators and we plot the estimated $\E[Y|do(A)]$ in Figure \ref{fig:dahl_lochner}. In Figure \ref{fig:dahl_lochner}, we observe that in fact none of the methods work well, including KIV-oracle. This suggests that the instrumental variable is only weakly associated with the input. A simple analysis on the dataset suggests exactly this: the average increase in average yearly income from 1985 to 2000 is around \$2000, whereas the largest increase between the EITC credit rate of two consecutive years is about 10\%, which corresponds to only a 10\% portion of the increase in income.

\begin{figure}[t]
\centering
\includegraphics[scale=0.6]{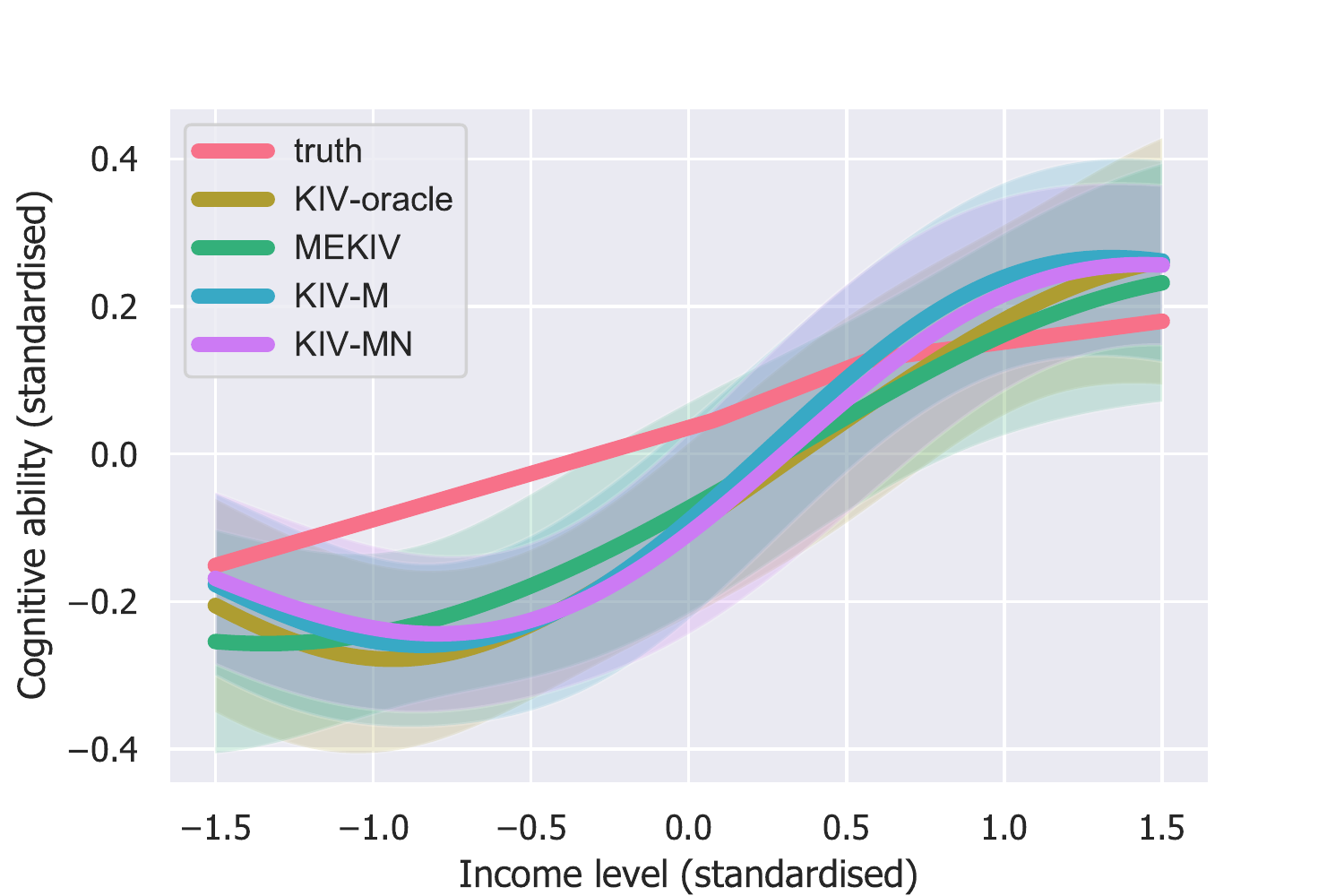}
\caption{Dahl-Lochner Income on Cognitive outcome}
\label{fig:dahl_lochner}
\end{figure}

\end{document}